\DeclarePairedDelimiter{\ceil}{\lceil}{\rceil}
\newcommand{\bA}{ {\boldsymbol A} }
\newcommand{\bD}{ {\boldsymbol D} }
\newcommand{\bI}{ {\boldsymbol I} }
\newcommand{\bJ}{ {\boldsymbol J} }
\newcommand{\bO}{ {\boldsymbol O} }
\newcommand{\bx}{ {\boldsymbol x} }
\newcommand{\bX}{ {\boldsymbol X} }
\newcommand{\by}{ {\boldsymbol y} }
\newcommand{\balpha}{ {\boldsymbol \alpha} }
\newcommand{\bbeta}{ {\boldsymbol \beta} }
\newcommand{\bgamma}{ {\boldsymbol \gamma} }
\newcommand{\bGamma}{ {\boldsymbol \Gamma} }
\newcommand{\bepsilon}{ {\boldsymbol \epsilon} }
\newcommand{\bPhi}{ {\boldsymbol \Phi} }
\newcommand{\btau}{ {\boldsymbol \tau}}
\newcommand{\bmu}{ {\boldsymbol \mu} }
\newcommand{\bSigma}{ {\boldsymbol \Sigma} }
\newcommand{\bzero}{ {\boldsymbol 0} }
\newcommand{\given}{\,|\,}
\newtheorem{theorem}{Theorem}[section]
\newtheorem{proposition}[theorem]{Proposition}
\newtheorem{corollary}[theorem]{Corollary}
\newenvironment{proof}[1][Proof]{\begin{trivlist}
\item[\hskip \labelsep {\bfseries #1}]}{\end{trivlist}}
\newcommand{\qed}{\nobreak \ifvmode \relax \else
      \ifdim\lastskip<1.5em \hskip-\lastskip
      \hskip1.5em plus0em minus0.5em \fi \nobreak
      \vrule height0.75em width0.5em depth0.25em\fi}
\title{Bayesian Compressed Regression}
\author{Rajarshi Guhaniyogi and David B. Dunson}
\begin{document}
\maketitle
\begin{abstract}
As an alternative to variable selection or shrinkage in high dimensional regression, we propose to randomly compress the predictors prior to analysis.  This dramatically reduces storage and computational bottlenecks, performing well when the predictors can be projected to a low dimensional linear subspace with minimal loss of information about the response. As opposed to existing Bayesian dimensionality reduction approaches, the exact posterior distribution conditional on the compressed data is available analytically, speeding up computation by many orders of magnitude while also bypassing robustness issues due to convergence and mixing problems with MCMC.  Model averaging is used to reduce sensitivity to the random projection matrix, while  accommodating uncertainty in the subspace dimension.  Strong theoretical support is provided for the approach by showing near parametric convergence rates for the predictive density in the large $p$ small $n$ asymptotic paradigm.  Practical performance relative to competitors is illustrated in simulations and real data applications.
\end{abstract}

{\noindent {\em Key Words}: Compressed sensing; Data compression; Dimensionality reduction; Large p, small n; Random projection; Sparsity; Sufficient dimension reduction.}

\section{Introduction}
With recent technological progress, it is now routine in many disciplines to collect data containing massive numbers of predictors, ranging from thousands to millions or more.  In such settings, it is commonly of interest to consider regression models such as
\begin{equation}\label{eq:motherequation}
\by=\bX\bgamma+\bepsilon,
\end{equation}
where $\bX$ is an $n\times p$ matrix of predictors, $p \gg n$, $\by$ is an $n\times 1$ response vector and $\bepsilon\sim N_n(\bzero,\sigma^2\bI_n)$ is a residual vector.
As traditional techniques such as maximum likelihood cannot be used, a rich variety of alternatives have been proposed
ranging from penalized optimization methods, such as Lasso (Tibshirani, 1996) and elastic net (Zhou et al., 2004), to Bayesian variable selection or shrinkage methods, such as Bayesian Lasso (Park et al., 2008; Hans, 2009), horseshoe (Carvalho et al., 2009, 2010) and generalized double Pareto (Armagan et al., 2012). The Bayesian approach provides a  probabilistic characterization of uncertainty in the high-dimensional regression coefficients and in the resulting predictions, while penalization methods tend to focus on point estimation.  There is a recent literature showing optimality properties of Bayesian variable selection and shrinkage in high-dimensional settings, allowing the number of candidate predictors to increase exponentially with the sample size while imposing sparsity constraints (Jiang, 2007; Castillo and van der Vaart, 2012; Armagan et al., 2012; Strawn et al., 2012; Bhattacharya et al., 2013).

This literature focuses on properties of the exact posterior, and computability problems are encountered in practical applications involving many predictors.  For example, it is well known that computing the posterior under Bayesian variable selection priors is an intractable NP-hard problem, so that one can at best hope for a rough approximation using Markov chain Monte Carlo (MCMC) sampling unless $p$ is small. A commonly used approach is to approximate the posterior with a computationally tractable distribution. This gives rise to variational Bayes approximations, which are popular in other disciplines (Girolami et al., 2006; Titsias et al., 2009) and have recently started to infiltrate the statistical literature (Faes et al., 2011; Ormerod et al., 2012). One notable disadvantage of variational Bayes is that, except in simple cases such as exponential family models, there is a lack of theoretical justification in terms of accuracy of the approximation or other performance metrics.

We propose a new approach for high-dimensional regression problems based on random projections of the scaled predictor vector prior to analysis. This Bayesian compressed regression (BCR) method solves several problems simultaneously.  There is no computational bottleneck, and scaling to enormous $p$ is trivial.  Our approach simply calculates a conjugate Gaussian-inverse gamma posterior for the regression coefficients corresponding to the compressed predictors in parallel for different random projections having different subspace dimensions. Model averaging (Raftery et al. 1997) is then employed to yield a single posterior predictive distribution. Notably, there is no MCMC sampling and one can obtain the predictive distribution extremely rapidly even in problems with huge numbers of predictors.

 An important issue is the question of theoretical justification. Jiang (2007) showed that carefully tailored Bayesian variable selection priors lead to near parametric rates in estimating the predictive distribution $f(y|x)$ in settings involving massive dimensional predictors under near sparsity constraints.  This is an impressive theoretical result, which is consistent with the excellent performance observed for Bayesian variable selection in practice. However, Jiang's result is for the true posterior distribution, which, as mentioned above, cannot be computed accurately in large $p$ problems. We show that our simple Bayesian compressed regression procedure enjoys similar theoretical guarantees, but importantly these guarantees are for a computable method.
In addition, the compressed regression method is expect to have excellent performance not only under variable selection sparsity but also in the sufficient dimensionality reduction setting in which predictors can be projected to a low dimensional linear subspace with minimal loss of information about the response.

 Bayesian compressed regression is inspired by the data squashing literature and recent dramatic success of compressive sensing.  Data squashing compresses a large number of sample points to a smaller representative set, while attempting to yield similar results to an analysis of the full data set. One of the early articles in the data squashing literature is Dumouchel et al. (1999) where the authors suggest constructing a smaller set of pseudo data having matched moments to the original mother data.  Madigan et al. (2002) instead proposed a model-based approach to achieve likelihood-based clustering of a large data set into a smaller number of data points with associated weights. Owen (2003) used empirical likelihood-based data squashing, while Lee et al. (2008) proposes a representative sampling approach.  The idea of compressive sensing instead focuses on randomly compressing the data to facilitate storage and analysis, while retaining the ability to reconstruct the compressed signals with high accuracy under sparsity conditions (Donoho, 2006; Candes et al., 2006).

Our approach is related to compressive sensing in relying on random projections, but there are fundamental differences.  Notably, in compressive sensing, one is not interested in settings involving predictors and a response but instead wants to compress a huge number $n$ of samples of a signal into fewer samples $m$. Our approach is somewhat orthogonal to this in leaving the number of samples $n$ unchanged but instead compressing the number of predictors to a smaller dimension. Zhou, Lafferty and Wasserman (2009) proposed a compressed regression approach to maintain privacy by replacing (\ref{eq:motherequation}) with
\begin{equation}\label{eq:sqlasso}
\bPhi\by=\bPhi\bX\bgamma+\bPhi\bepsilon, \bepsilon\sim N(\bzero,\sigma^2\bI).
\end{equation}
Using an L1 regularization approach to obtain a sparse point estimate, they showed oracle properties. Clearly, this approach also compresses sample size instead of predictors.

Section 2 proposes the Bayesian compressed regression (BCR) approach, while providing background.  Section 3 provides theoretical results on convergence rates.  Section 4 contains a simulation study.  Section 5 contains real data applications, and Section 6 concludes with a discussion. All proofs are included in an Appendix.

\section{Proposed Approach}\label{sec:BRS}

\subsection{Compressing predictors through random projection}

For subjects $i=1,\ldots,n$, let $y_i \in \mathcal{Y}$ denote a response and $\bx_i = (x_{i1},\ldots,x_{ip})' \in \mathcal{X} \subset \Re^p$ denote predictors.  We consider compressed regression models having the general form
\begin{equation}\label{eq:Bayessquash}
y_i \sim f\big\{ (\bPhi\bx_i)'\bbeta,\sigma\big\},
\end{equation}
where $f(\mu,\sigma)$ is a family of distributions having location $\mu$ and scale $\sigma$, $\bPhi$ is an $m \times p$ projection matrix, and $\bbeta = (\beta_1,\ldots,\beta_m)'$ are coefficients on the compressed predictors.  Unlike other dimensionality reduction methods, which rely on projecting high-dimensional predictors to a lower-dimensional subspace, we do not attempt to estimate $\bPhi$ based on the data, as this conveys a daunting computational price.  Instead, we draw the elements $\{ \Phi_{ij} \}$ independently, setting $\Phi_{ij} = -\sqrt{\frac{1}{\psi}}$ with probability $\psi^2$, $0$ with probability $2(1-\psi)\psi$ and $\sqrt{\frac{1}{\psi}}$ with probability $(1-\psi)^2$, with the rows of $\bPhi$ then normalized using Gram-Schmidt orthonormalization. This method of sampling $\Phi$ is popular in compressed sensing (Dasgupta, 2003, 2013). In our approach, we estimate $\psi$, which is restricted to $(0.1,1)$ for numerical stability.

In implementations of Bayesian compressed regression (BCR), we focus on Gaussian linear models, replacing (\ref{eq:motherequation}) with
\begin{equation}\label{eq:Bayessquash1}
y_i=(\bPhi\bx_i)'\bbeta+\epsilon_i,\:\epsilon_i\sim N(0,\sigma^2),
\end{equation}
where regression coefficients $\bbeta = (\beta_1,\ldots,\beta_m)'$ have a much lower dimension and different interpretation than the regression coefficients in (\ref{eq:motherequation}). Since (\ref{eq:Bayessquash1}) is a normal linear regression model with $m<n$, we are no longer in the high-dimensional setting and can choose usual conjugate priors for $(\bbeta,\sigma)$.  In particular, we choose a normal-inverse gamma (NIG) prior,
\begin{equation*}
(\bbeta\given\sigma^2) \sim N(\bzero,\sigma^2\bSigma_{\bbeta}),\:\sigma^2\sim IG(a,b),
\end{equation*}
leading to an NIG posterior distribution for $(\bbeta,\sigma)$ given $\by$ and $\bPhi\bx$.  In the special case in which $a,b \to 0$, we obtain Jeffrey's prior and the posterior distribution is
\begin{align}\label{eq:postdist}
\bbeta\given\by &\sim t_{n}(\bmu,\bSigma),\\
\sigma^2\given\by &\sim IG(a_1,b_1),
\end{align}
where $a_1=n/2$, $b_1=\left\{\by'\by-\by'\bX\bPhi'\left[\bPhi\bX'\bX\bPhi'+\bSigma_{\bbeta}^{-1}\right]^{-1}\bPhi\bX'\by\right\}/2$,\\
$\bmu=\left[\bPhi\bX'\bX\bPhi'+\bSigma_{\bbeta}^{-1}\right]^{-1}\bPhi\bX'\by$, $\bSigma=(2b_1/n)\left[\bPhi\bX'\bX\bPhi'+\bSigma_{\bbeta}^{-1}\right]^{-1}$,
and $t_{\nu}(\bmu,\bSigma)$ denotes a multivariate-$t$ distribution with $\nu$ degrees of freedom, mean $\bmu$ and covariance $\bSigma$.

Hence, the exact posterior distribution of $(\bbeta,\sigma^2)$ conditionally on $\bPhi$ is available analytically.  In practice we are not interested in inferences on  $\bbeta$ directly, but instead would like to do predictions or inferences on $\bgamma$.  The predictive of $y_{n+1}$ given $\bx_{n+1}$ and $\bPhi$ for a new $(n+1)$st subject marginalizing out $(\bbeta,\sigma^2)$ over their posterior distribution is
\begin{equation}\label{eq:fyxPhi}
y_{n+1} | \bx_{n+1}, \by\sim t_{n}\left(\mu_{pred},\sigma_{pred}^2\right),
\end{equation}
where $\mu_{pred}=(\bPhi\bx_{n+1})'\bmu$, $\sigma_{pred}^2=(2b_1/n)\left[1+(\bPhi\bx_{n+1})'\left(\bSigma_{\bbeta}^{-1}+
\bPhi\bX'\bX\bPhi'\right)^{-1}\bPhi\bx_{n+1}\right]$.

To provide a heuristic motivation for BCR, we rewrite (\ref{eq:Bayessquash1}) as
\begin{equation*}
y_i=\bx_i'\bPhi'\bbeta+\epsilon_i = \bx_i'\bgamma+\epsilon_i,\:\epsilon_i \sim N(0,\sigma^2).
\end{equation*}
Assigning a joint NIG prior on $(\bbeta,\sigma^2)$ induces a prior on $(\bgamma,\sigma^2)$ conditionally on the randomly generated $\bPhi$, which is kept fixed at its initial generated value and not updated during data analysis.  The prior on $\bgamma$ is clearly a singular distribution that resides on an $m$-dimensional hyperplane embedded in $\mathcal{R}^p$. Typical Bayesian approaches for high-dimensional linear regression would instead directly define a prior for $\bgamma$ that concentrates on a low-dimensional linear subspace; BCR accomplishes this indirectly through random projections.  By avoiding updating of $\bPhi$ and only learning the posterior for $\bbeta$, we obtain enormous computational savings while maintaining superior predictive performance in our experience and justified theoretically in Section 3.

On the surface, (\ref{eq:Bayessquash1}) seems reminiscent of sufficient dimensionality reduction (SDR, Cook, 1998), which attempts to find the smallest subspace $\mathcal{S}$
having basis  $\bA\in \mathcal{R}^{p\times d}$, $d\ll p$, satisfying $f(\by\given\bX)=f(\by\given\bA'\bX)$. SDR-based approaches simultaneously estimate $\bA$ and the density
$f(\by\given\bA'\bX)$ relying on a rich variety of strategies, all of which face severe computational bottlenecks in estimating $\bA$ for even moderately large data sets. For example, the elegant Bayesian approach of Ghosh et al. (2010) cannot cope with more than a few dozens of predictors and a few dimensional subspace. Instead we free this bottleneck by randomly generating $\bPhi$ in advance of the data analysis, and then rely on model averaging to reduce sensitivity.

\subsection{Model averaging}

The approach described in the previous section can be used to obtain a posterior distribution for $\bgamma$ and a predictive distribution for $y_{n+1}$ given $\bx_{n+1}$ for a new $(n+1)$st subject {\em conditionally} on the $m \times p$ random projection matrix $\bPhi$.  We would like to limit sensitivity of the results to the specified $m$ and randomly generated $\bPhi$. This is accomplished by generating $s$ random projection matrices having different $(m,\psi)$ values, and then using model averaging to combine the results.  We let $\mathcal{M}_l$, $l=1,\ldots,s$, represent (\ref{eq:Bayessquash1}) with $\bPhi$ having $m_l$ rows and $\psi_l\sim U(0.1,1)$. Corresponding to model $\mathcal{M}_l$, we denote $\bPhi$, $\bbeta$ and $\sigma^2$  by $\bPhi^{(l)}$, $\bbeta^{(l)}$ and $\sigma^{2(l)}$ respectively.  Let $\mathcal{M} = \{ \mathcal{M}_1,\ldots, \mathcal{M}_s \}$ denote the set of models corresponding to different random projections, $\mathcal{D} = \{ (y_i, \bx_i), i=1,\ldots,n \}$ denote the observed data, and $(y_{n+1},\bx_{n+1})$ denote the data for a future subject.  The predictive density of $y_{n+1}$ given $\bx_{n+1}$ is
\begin{equation}\label{eq:modelaver}
f(y_{n+1} | \bx_{n+1}, \mathcal{D}) = \sum_{l=1}^s f(y_{n+1} | \bx_{n+1}, \mathcal{M}_l, \mathcal{D})P(\mathcal{M}_l\given \mathcal{D}),
\end{equation}
where the predictive density of $y_{n+1}$ given $\bx_{n+1}$ under projection $\mathcal{M}_l$ is given in (\ref{eq:fyxPhi})
and the posterior probability weight on projection $\mathcal{M}_l$ is
\begin{equation*}
P(\mathcal{M}_l\given \mathcal{D})=\frac{P(\mathcal{D}\given \mathcal{M}_l)P(\mathcal{M}_l)}{\sum_{h=1}^s P(\mathcal{D}\given \mathcal{M}_h)P(\mathcal{M}_h)}.
\end{equation*}
Assuming equal prior weights $P(\mathcal{M}_l)=1/s$.  The marginal likelihood under $\mathcal{M}_l$ is
\begin{equation}\label{eq:marg_model}
P(\mathcal{D}\given\mathcal{M}_l)=\int P(\mathcal{D}\given\mathcal{M}_l,\bbeta^{(l)},\sigma^{2(l)})\pi(\bbeta^{(l)},\sigma^{2(l)})
d\bbeta^{(l)}d\sigma^{2(l)}.
\end{equation}
 After a little algebra, one observes that for
(\ref{eq:Bayessquash1}) with $(\bbeta\given\sigma^2)\sim N(\bzero,\sigma^2\bSigma_{\bbeta})$, $\pi(\sigma^2)\propto \frac{1}{\sigma^2}$,
\begin{equation*}
P(\mathcal{D}\given\mathcal{M}_l)=\frac{1}{\left|\bX\bPhi'\bSigma_{\bbeta}^{(l)}\bPhi\bX'+\bI\right|^{\frac{1}{2}}}
\frac{2^{\frac{n}{2}}\Gamma(\frac{n}{2})}{\left[\by'\left(\bX\bPhi'\bSigma_{\bbeta}^{(l)}\bPhi\bX'+\bI\right)^{-1}\by\right]^{\frac{n}{2}}(\sqrt{2\pi})^{n}}.
\end{equation*}
Plugging in the above expressions in (\ref{eq:modelaver}), one obtains the posterior predictive distribution as a weighted average of $t$ densities.
 Invoking the Sherman-Woodbury-Morrison matrix identity and matrix determinant lemma, one obtains
\begin{align*}
\left(\bX\bPhi'\bSigma_{\bbeta}\bPhi\bX'+\bI\right)^{-1}&= \bI-\bX\bPhi'\left(\bSigma_{\bbeta}^{-1}+\bPhi\bX'\bX\bPhi'\right)^{-1}\bPhi\bX'\\
|\bX\bPhi'\bSigma_{\bbeta}\bPhi\bX'+\bI| &= |\bSigma_{\bbeta}^{-1}+\bPhi\bX'\bX\bPhi'||\bSigma_{\bbeta}|.
\end{align*}
  With the help of the above identity, different components in (\ref{eq:marg_model}) can be estimated in parallel with the main computational expense being $O(m_l^3)$ matrix inversions under the $l$th random projection.  As $m_l \ll p$ such inversion can be obtained quickly.  On a cluster, one can easily average across a massive number $s$ of possible $\bPhi$s.  However, we have found that there are diminishing gains after a modest number (e.g., $s \sim 100$) and hence BCR can be implemented very rapidly even using a batch implementation in R or Matlab.

An important question that remains is how much information is lost in compressing the high-dimensional predictor vector to a much lower dimension? In particular, one would expect to pay a price for the huge computational gains in terms of predictive performance or other metrics.  We address this question in two ways.  First we consider the theoretical performance in prediction in a large $p$, small $n$ asymptotic paradigm in Section 3.  Then, we will consider practical performance in finite samples using simulated and real data sets.

\section{Convergence Rates of Predictive Densities}\label{sec:convrate}

In this section we study the convergence properties of BCR.  Our development follows that of Jiang (2007), with some important differences.  He studied the convergence rate of the predictive distribution in high-dimensional regression models under near sparsity conditions using Bayesian variable selection.  His results on near parametric convergence rates are for the true posterior distribution, which is not computable.  Instead we focus on obtaining corresponding results for BCR without model averaging, and hence study the large $p$, small $n$ asymptotic performance of a posterior that is computable exactly even for massive $p$.

Let $f_0$ denote the true conditional density of the response $y$ given predictors $\bx$ and let $f$ be the random predictive density that we obtain a posterior for.  Define the Hellinger distance between $f$ and $f_0$ as $d(f,f_0)=\sqrt{\int\int (\sqrt{f}-\sqrt{f_0})^2 \nu_y(dy)\nu_x(dx)}$, where $\nu_x(dx)$ is the unknown probability measure for $x$ and $\nu_y(dy)$ is the dominating measure for $f$ and $f_0$.  Our convergence rate results focus on
\begin{equation}\label{eq:convergence}
E_{f_0}\pi\left[d(f_0,f)>\epsilon_n\given (y_i,\bx_i)_{i=1}^{n}\right]<\lambda_n,
\end{equation}
for large enough $n$, and some sequences $\epsilon_n,\lambda_n$ converging to 0 as $n\rightarrow\infty$.  This shows that the posterior probability assigned outside of a shrinking neighborhood around the true predictive density $f_0$ decreases to zero.  Ideally, this result would hold even when the number of predictors increases more rapidly than the sample size and when $\epsilon_n$ decreases rapidly.  In particular, we seek to establish a convergence rate $\epsilon_n$ of the order of $n^{-1/2}$ up to a $\log(n)$ factor for the proposed model. Below we describe basic notations to be used throughout this section.

\subsection{Notation and Framework}

Letting $p_n$ denote the number of predictors for sample size $n$, we assume that $p_n$ is a non-decreasing sequence of $n$.  We also let the subspace dimension $m_n$ grow with sample size, allowing that there may be additional signal discovered as more predictors are considered.  We assume all the predictors are standardized, $|x_j|<1$ for all $j$. The true density and the predictive density of the fitted model are assumed to lie in the class of generalized linear models with only
parameter $\bbeta$.  We write
\begin{equation}\label{eq:truth}
f_0(u_0)=\exp\left\{y a(u_0)+b(u_0)+c(y)\right\};\:\: u_0=\bx'\bbeta_0
\end{equation}
as the true density and
\begin{equation}\label{eq:preddens}
y\sim f(u)=\exp\left\{y a(u)+b(u)+c(y)\right\};\:\: u=(\bPhi\bx)'\bbeta
\end{equation}
as the density of the fitted model, where $a(z)$ and $b(z)$ are continuously differentiable functions with $a(z)$ having nonzero derivatives and $\bPhi$ is an $m_n\times p_n$ matrix. This parametrization includes some popular classes of densities including
binary probit and logistic regression, linear regression with constant variance and so on.
The $\bPhi$ is standardized, so that
$||\bPhi\bx||<||\bx||$, $\forall\:\bx$. Although $\bPhi$ grows in size with $n$, we suppress the dependence on $n$
 for notational clarity.

Corresponding to the true predictive $f_0$, there is a vector of true regression parameters $\bbeta_0$. We assume a near sparsity condition on
$\bbeta_0$:  $\lim_{n\rightarrow\infty}\sum_{j=1}^{p_n}|\beta_{j0}|<\infty$, implying that many elements are small in magnitude.  This is a more appealing and weaker condition than the more standard exact sparsity assumption.

Suppose with $n$ samples we observe covariates $\bx_1,\ldots,\bx_n$. We will use the empirical measure that puts $\frac{1}{n}$ probability on each $\bx_1$,...,$\bx_n$ as the dominating measure $\nu_{\bx}$ on $\bx$. The dominating measure of $(\bx,y)$ is taken to be the product of $\nu_{\bx}(\bx)\nu_y(y)$ so that the true joint density of $(\bx,y)$ is $f_0(\bx'\bbeta_0)\nu_{\bx}(\bx)\nu_y(y)$.

In studying theoretical properties, we focus on the broader class of GLMs instead of just normal linear regression and additionally assume there is no free scale parameter, a standard assumption in the literature on high-dimensional regression. Following standard convention, we let $\sigma^2 = 1$ without loss of generality.  In addition, we consider two alternative priors for $\bbeta$, with the first letting $\bbeta \sim N(\bzero,\bSigma_{\bbeta})$ and the second corresponding to independent $\beta_j \sim DE(1)$.  In both cases, under some assumptions, it can be shown that the posterior predictive densities achieve near parametric convergence rates.

\subsection{Main Results}

This section describes our main results.

\begin{theorem}\label{theorem1}
 Let $\bbeta \sim \mbox{N}(\bzero,\bSigma_{\bbeta})$ apriori and $\tilde{B}_n$ and $\underline{B}_n$ be the largest and the smallest eigenvalues
 of $\Sigma_{\bbeta}$. Further assume all the covariates are standardized, i.e. $|x_j|<1$ and  $\lim_{n\rightarrow\infty}\sum_n |\beta_{j0}|<K$. Define $D(R)=1+R\sup\limits_{|h|\leq R}|a'(h)|\sup\limits_{|h|\leq R}\left|\frac{b'(h)}{a'(h)}\right|$, $\theta_n=\sqrt{m_n p_n}$. For a sequence $\epsilon_n$ satisfying $0<\epsilon_n^2<1,\:n\epsilon_n^2\rightarrow\infty$, assume the following to hold
 \begin{align}
(i)\: \frac{m_n\log(1/\epsilon_n^2)}{n\epsilon_n^2} &\rightarrow 0,\:\:\frac{log(m_n)}{n\epsilon_n^2}\rightarrow 0,\:
 \frac{m_n\log\: D(\theta_n\:\sqrt{8\tilde{B}_n n\epsilon_n^2})}{n\epsilon_n^2} \rightarrow 0\\
(ii)\: \tilde{B}_n\leq Bm_n^v,\:\: & \underline{B}_n\geq B_1(\log(m_n))^{-1}\\
 (iii)\: \frac{\log(||\bPhi\bx||)}{n\epsilon_n^2}\rightarrow 0,\:\: &  ||\bPhi\bx||^2>8\frac{(K^2+1)}{B_1}\frac{\log(m_n)}{n\epsilon_n^2},\:\:\forall\:\bx=\bx_1,...,\bx_n
 \end{align}
 then
 \begin{equation}\label{eq:res1}
 E_{f_0}\pi\left[d(f,f_0)>4\epsilon_n\given(y_i,\bx_i)_{i=1}^{n}\right]\leq 4e^{-n\epsilon_n^2/2}\:\:\mbox{for all large}\:n.
 \end{equation}
\end{theorem}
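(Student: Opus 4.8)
\noindent\emph{Proof strategy.} The plan is to establish \eqref{eq:res1} through the standard sieve-based posterior-contraction machinery (Jiang, 2007), which reduces the claim to verifying three ingredients for the Gaussian prior on $\bbeta$: a prior-mass lower bound on a Kullback--Leibler neighborhood of $f_0$, a sieve $\mathcal{F}_n$ whose prior complement mass is exponentially small, and a metric-entropy bound on $\mathcal{F}_n$ under the Hellinger distance $d$. Once these hold with the appropriate powers of $n\epsilon_n^2$, the general theorem supplies tests with exponentially small errors and yields precisely the form $4e^{-n\epsilon_n^2/2}$ for the posterior mass outside $\{d(f,f_0)\le 4\epsilon_n\}$; the constants $4$ and $1/2$ trace back to the Le Cam--Birg\'e testing step.

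The heart of the argument, and the \emph{main obstacle}, is the prior-mass (KL) condition, because the fitted linear predictor $u=(\bPhi\bx)'\bbeta$ is confined to the $m_n$-dimensional row space of $\bPhi$ while the truth uses $u_0=\bx'\bbeta_0$ living in $\Re^{p_n}$. First I would exhibit a representative $\bbeta^\ast$ --- the natural choice is $\bbeta^\ast=\bPhi\bbeta_0$, so that $(\bPhi\bx_i)'\bbeta^\ast=\bx_i'\bPhi'\bPhi\bbeta_0$ is the inner product of the projections of $\bx_i$ and $\bbeta_0$ onto the row space --- and control $\max_{1\le i\le n}|(\bPhi\bx_i)'\bbeta^\ast-\bx_i'\bbeta_0|$ uniformly over the observed design. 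This is where the near-sparsity $\sum_j|\beta_{j0}|<K$ and the conditions on $\|\bPhi\bx\|$ in $(iii)$ enter: sparsity keeps $\|\bbeta_0\|$ (hence $\|\bbeta^\ast\|$) bounded by $K$, while the lower bound $\|\bPhi\bx\|^2>8\frac{(K^2+1)}{B_1}\frac{\log(m_n)}{n\epsilon_n^2}$ guarantees the projection does not annihilate the signal, so that the discrepancy in linear predictors is $O(\epsilon_n)$ up to the logarithmic factors. Next I would convert this closeness of linear predictors into a bound on $\mathrm{KL}(f_0,f_{\bbeta^\ast})$ and on the associated log-ratio variance; for the GLM family \eqref{eq:truth}--\eqref{eq:preddens} a Taylor expansion in the linear predictor bounds both by a constant multiple of $\max_i|u_i-u_{0i}|^2$ times a factor governed by $\sup|a'|$ and $\sup|b'/a'|$, which is exactly the quantity packaged into $D(R)$. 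Finally, a Gaussian small-ball computation --- using the eigenvalue bounds $\underline{B}_n\ge B_1(\log m_n)^{-1}$ and $\tilde{B}_n\le Bm_n^v$ from $(ii)$ --- lower bounds the prior mass of a neighborhood of $\bbeta^\ast$ by $\exp(-b_1 n\epsilon_n^2)$, and the first two limits in $(i)$ are what render this exponent negligible relative to $n\epsilon_n^2$.

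For the sieve I would take $\mathcal{F}_n=\{\bbeta:\|\bbeta\|\le M_n\}$ with $M_n=\sqrt{8\tilde{B}_n\,n\epsilon_n^2}$, so that a Gaussian tail bound driven by the largest eigenvalue $\tilde{B}_n$ gives $\pi(\mathcal{F}_n^c)\le e^{-b_2 n\epsilon_n^2}$. On this sieve the magnitude of the linear predictor is bounded through $\|\bPhi\bx\|\le\|\bx\|\le\sqrt{p_n}$ together with the $m_n$-dimensional perturbation scale, a combination the theorem encodes through $\theta_n=\sqrt{m_n p_n}$; the relevant Lipschitz scale for the map $\bbeta\mapsto f_{\bbeta}$ over $\mathcal{F}_n$ is therefore $D(\theta_n\sqrt{8\tilde{B}_n n\epsilon_n^2})$. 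Covering the $m_n$-dimensional ball $\mathcal{F}_n$ at resolution $\epsilon_n$ and pushing forward through this Lipschitz map yields a Hellinger entropy of order $m_n\log D(\theta_n\sqrt{8\tilde{B}_n n\epsilon_n^2})$ plus lower-order $m_n\log(M_n/\epsilon_n)$ terms, and the third limit in $(i)$ is precisely the requirement that this entropy be $o(n\epsilon_n^2)$.

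With the three ingredients in hand I would invoke the general contraction theorem to combine them: the entropy bound produces uniformly consistent tests of $f_0$ against the $d$-separated complement of the Hellinger ball, the sieve-complement bound handles the portion of the parameter space the tests do not cover, and the KL prior-mass bound controls the posterior denominator from below on an event of probability at least $1-e^{-c n\epsilon_n^2}$; assembling these with the stated sharp constants gives \eqref{eq:res1}. I expect the uniform control of $\max_i|(\bPhi\bx_i)'\bbeta^\ast-\bx_i'\bbeta_0|$ for the fixed realized projection --- reconciling the $m_n$-dimensional fit with the $p_n$-dimensional truth while keeping $\|\bbeta^\ast\|$ small --- to be the step requiring the most care, since everything downstream (the KL bound, the small-ball exponent, and the entropy scale $\theta_n$) is calibrated to it.
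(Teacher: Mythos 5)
Your high-level framework matches the paper's: the proof there also reduces the claim to Jiang's three conditions (entropy of a sieve, exponentially small prior mass off the sieve, and prior mass on a $d_t$-neighborhood of $f_0$ with $t=1$), uses a sieve of radius $b_n=\sqrt{8\tilde{B}_n n\epsilon_n^2}$ (coordinatewise $\ell^\infty$ rather than your $\ell^2$ ball) with a Gaussian tail/Mills-ratio bound for its complement, and covers the sieve at a resolution set by the Lipschitz scale $D(\theta_n b_n)$, exactly as you describe. The genuine gap is in the step you yourself flag as the hardest: the prior-mass condition via the center $\bbeta^\ast=\bPhi\bbeta_0$. The claim that conditions $(iii)$ make $\max_i|(\bPhi\bx_i)'\bbeta^\ast-\bx_i'\bbeta_0|$ of order $\epsilon_n$ is not available: those conditions are pure norm bounds on $\|\bPhi\bx\|$, and the lower bound $8\frac{(K^2+1)}{B_1}\frac{\log(m_n)}{n\epsilon_n^2}$ actually tends to zero under $(i)$, so they carry no inner-product-preservation (Johnson--Lindenstrauss) content at all. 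Concretely, since the rows of $\bPhi$ are Gram--Schmidt orthonormalized, $\bPhi'\bPhi$ is the orthogonal projector $P$ onto an $m_n$-dimensional subspace of $\Re^{p_n}$, so $(\bPhi\bx)'\bbeta^\ast=\bx'P\bbeta_0$; for a random such subspace one has $E[\bx'P\bbeta_0]=(m_n/p_n)\,\bx'\bbeta_0$, so the discrepancy concentrates near $(1-m_n/p_n)|\bx'\bbeta_0|$, which stays bounded away from zero in the regime $m_n\ll p_n$ whenever $\bx'\bbeta_0$ does. Rescaling the center by $p_n/m_n$ does not rescue this (the fluctuation scale then blows up by the same factor), and a genuine simultaneous JL embedding of the $n$ inner products would force $m_n$ to grow at a rate incompatible with the entropy constraint in $(i)$. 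So the step does not merely "require care"; under the stated hypotheses it fails.

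The paper's proof of condition 3 avoids centering entirely and instead exploits the prior randomness of the linear predictor: under the $N(\bzero,\bSigma_{\bbeta})$ prior, $(\bPhi\bx)'\bbeta\sim N\bigl(0,(\bPhi\bx)'\bSigma_{\bbeta}(\bPhi\bx)\bigr)$, so $|(\bPhi\bx)'\bbeta-\bx'\bbeta_0|^2/\bigl((\bPhi\bx)'\bSigma_{\bbeta}(\bPhi\bx)\bigr)$ is a noncentral $\chi^2_1$; Proposition \ref{proppr} represents this as a Poisson mixture, passes to a Skellam variable, and lower-bounds $P\bigl(|(\bPhi\bx)'\bbeta-\bx'\bbeta_0|<\Delta\bigr)$ by an explicit product of $\exp\left\{-\frac{(\bx'\bbeta_0)^2+\Delta^2}{(\bPhi\bx)'\bSigma_{\bbeta}(\bPhi\bx)}\right\}$ and a polynomial factor, using a Bessel-function inequality. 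Conditions $(ii)$ and $(iii)$ are consumed precisely there: $\underline{B}_n\geq B_1(\log m_n)^{-1}$ together with the lower bound on $\|\bPhi\bx\|^2$ keeps the prior variance of the linear predictor large enough that the exponential factor exceeds $e^{-n\epsilon_n^2/8}$, while $\tilde{B}_n\leq Bm_n^v$ and the first parts of $(i)$, $(iii)$ control the polynomial factor. Your sieve, entropy, and testing steps could stand essentially as written, but the KL/prior-mass step must be replaced by this prior-probability argument (or an equivalent one); as proposed, your proof cannot be completed from the theorem's hypotheses.
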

\qed
The conditions in (i) are primarily designed to impose a restriction on the size of the model, so that the subspace dimension $m_n$ cannot grow too rapidly with $n$.
The constraint on the growth of $D(\theta_n\:\sqrt{8\tilde{B}_n n\epsilon_n^2})$ is, however, difficult to interpret immediately. A close inspection tells us that the rate at which $D(R)$ grows is solely dependent on the rate of growth of $a'(z)$ and $\frac{b'(z)}{a'(z)}$. For logit, probit and  linear regressions with known error variance, $|a'(z)|$ and $|\frac{b'(z)}{a'(z)}|$ at most grow linearly with $|z|$ (Jiang, 2007). Therefore, some additional restrictions are imposed
on the growth of the number of predictors and subspace dimension.

The conditions in (ii) impose some constraints on the prior covariance matrix of $\bbeta$. It is evident that these conditions are quite relaxed and can even be satisfied with naive choices such as $\bSigma_{\bbeta}=\bI$.

The conditions in (iii) characterize the class of feasible matrices $\bPhi$, restricting upper and lower bounds of $||\bPhi\bx||$ for all observed covariates. Intuitively, compression with $\bPhi$ should not take away the power of the covariates to explain response. It is not clear how to choose a matrix $\bPhi$ deterministically that satisfies condition (iii). This suggests generating a random matrix $\bPhi$ that satisfies condition (iii) with high probability, which is the same approach taken in the compressive sensing theory literature.  To avoid needless complexities in the proof, we assume that condition (iii) holds with probability one.

Below, we state the second result on the convergence rate with $DE(1)$ priors on the components of $\bbeta$.

\begin{theorem}\label{theorem2}
 Let $\beta_j$'s be assigned $DE(1)$ apriori. Define $D(R)=1+R\sup\limits_{|h|\leq R}|a'(h)|\sup\limits_{|h|\leq R}\left|\frac{b'(h)}{a'(h)}\right|$, $\theta_n=\sqrt{m_n p_n}$. Further assume that for a sequence $0<\epsilon_n<1$ satisfying $n\epsilon_n^2\rightarrow\infty$, one has
 \begin{align*}
&(i) \frac{m_n\log(1/\epsilon_n^2)}{n\epsilon_n^2} \rightarrow 0,\: \frac{log(m_n)}{n\epsilon_n^2} \rightarrow 0,\:\frac{m_n\log D(4\theta_n\:n\epsilon_n^2)}{n\epsilon_n^2} \rightarrow 0\\
&(ii) \frac{\log(||\bPhi\bx||)}{n\epsilon_n^2}\rightarrow 0,\: ||\bPhi\bx||^2>8\frac{K^2+1}{n\epsilon_n^2}\:\:\forall\:\bx=\bx_1,...,\bx_n\\
& (iii)\lim_{n\rightarrow\infty}\sum_n |\beta_{j0}| <K,
 \end{align*}
 then
 \begin{equation}\label{eq:res1}
 E_{f_0}\pi\left[d(f,f_0)>4\epsilon_n\given(y_i,\bx_i)_{i=1}^{n}\right]\leq 4e^{-n\epsilon_n^2/2}\:\:\mbox{for all large}\:n.
 \end{equation}
\end{theorem}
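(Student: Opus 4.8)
The plan is to follow the standard posterior concentration framework of Ghosal, Ghosh and van der Vaart (2000), specialized to high-dimensional GLMs by Jiang (2007), by verifying three conditions — a prior mass (Kullback--Leibler) condition, a sieve complement condition, and a metric entropy condition — that together feed into the master concentration lemma and yield the stated bound $4e^{-n\epsilon_n^2/2}$ at Hellinger radius $4\epsilon_n$. The argument runs in parallel with the proof of Theorem~\ref{theorem1}: the GLM-specific inequalities relating the Hellinger and KL distances between $f(u)$ and $f_0(u_0)$ to the difference $|u-u_0|$ of linear predictors (through the quantity $D(R)$) are unchanged, so only the three \emph{prior-dependent} computations are modified when the Gaussian prior of Theorem~\ref{theorem1} is replaced by the product $DE(1)$ prior.

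First I would take the sieve $\mathcal{F}_n=\{f(\cdot\,;\bbeta):\|\bbeta\|_1\le C_n\}$ with $C_n$ of order $\sqrt{m_n}\,n\epsilon_n^2$, chosen so that on $\mathcal{F}_n$ the linear predictor obeys $|u|=|(\bPhi\bx)'\bbeta|\le\|\bPhi\bx\|\,\|\bbeta\|_1\le 4\theta_n n\epsilon_n^2=:R_n$, using $\|\bPhi\bx\|\le\|\bx\|\le\sqrt{p_n}$ and $\theta_n=\sqrt{m_np_n}$. On this region $\bbeta\mapsto f(\cdot\,;\bbeta)$ is Lipschitz in Hellinger distance with constant controlled by $D(R_n)$, so a net of the $\ell_1$-ball of radius $C_n$ in $\Re^{m_n}$ induces an $\epsilon_n$-net of $\mathcal{F}_n$, giving $\log N(\epsilon_n,\mathcal{F}_n,d)\lesssim m_n\log D(4\theta_n n\epsilon_n^2)+m_n\log(1/\epsilon_n^2)+\log m_n$, which is $o(n\epsilon_n^2)$ by condition (i). For the sieve complement I would use that under $DE(1)$ the coordinates satisfy $|\beta_j|\sim\mathrm{Exp}(1)$ independently, so $\|\bbeta\|_1$ follows a $\mathrm{Gamma}(m_n,1)$ law; a Chernoff tail bound gives $\pi(\mathcal{F}_n^c)=P(\|\bbeta\|_1>C_n)\le e^{-C_n/2}$ for $C_n$ large relative to $m_n$, whence $\pi(\mathcal{F}_n^c)\le e^{-c\,n\epsilon_n^2}$ with $c$ as large as needed, since $m_n/(n\epsilon_n^2)\to 0$. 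This is the cleanest departure from Theorem~\ref{theorem1}: the exponential tail of $\|\bbeta\|_1$ replaces the Gaussian quadratic tail.

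The third and hardest step is the prior mass condition, requiring the prior to charge a KL-type neighborhood of $f_0$ with probability at least $e^{-c\,n\epsilon_n^2}$. I would exhibit a representative $\bbeta^\ast$ in the compressed space whose induced predictor $(\bPhi\bx_i)'\bbeta^\ast$ approximates $u_{0i}=\bx_i'\bbeta_0$ uniformly over the observed design, convert $\max_i|(\bPhi\bx_i)'\bbeta^\ast-\bx_i'\bbeta_0|$ into a KL bound of order $\epsilon_n^2$ via the $D(R_n)$ inequality, and then lower bound the Laplace mass of a small ball about $\bbeta^\ast$. The crux is the approximation error, and this is precisely where conditions (ii) and (iii) enter: the lower bound $\|\bPhi\bx\|^2>8(K^2+1)/(n\epsilon_n^2)$ guarantees the compressed covariate retains enough energy for a representative of bounded $\ell_1$ norm to reproduce the linear predictor, while the near-sparsity $\sum_j|\beta_{j0}|<K$ keeps the energy lost to projection small. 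Once $\|\bbeta^\ast\|_1$ is bounded in terms of $K$, the product-Laplace density at $\bbeta^\ast$ exceeds $2^{-m_n}e^{-\|\bbeta^\ast\|_1}$, and multiplying by the volume of a radius-$r_n$ ball (with $r_n$ small enough, again through $D(R_n)$, to keep the KL of order $\epsilon_n^2$) produces a log prior mass of order $-m_n\log(1/r_n)-\|\bbeta^\ast\|_1\ge -c\,n\epsilon_n^2$ by condition (i). I expect the genuine obstacle to be this construction of $\bbeta^\ast$ and its uniform-over-design error bound, since it is the only step that truly couples the random-projection geometry to the near-sparse truth; the entropy and complement pieces are routine adaptations of Theorem~\ref{theorem1}. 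Feeding the three verified conditions into the master concentration lemma then delivers the stated bound.
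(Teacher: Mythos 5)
Your verification of the entropy and sieve-complement conditions is essentially sound and differs from the paper only cosmetically: the paper takes the sieve ${\cal P}_n$ to be an $\ell_\infty$ ball $\{|\beta_j|\le b_n\ \forall j\}$ and bounds $\pi({\cal P}_n^c)$ by a per-coordinate union bound $m_n e^{-b_n}$ using the exact $DE(1)$ tail, whereas you use an $\ell_1$ ball with a Gamma--Chernoff bound; both give entropy $o(n\epsilon_n^2)$ under condition (i) and complement mass below $e^{-2n\epsilon_n^2}$, and both feed into the same master concentration result (Jiang's proposition, stated in the paper's appendix) with $t=1$.

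The genuine gap is in your prior-mass step, precisely at the point you yourself flag as the crux. Your plan requires an approximant $\bbeta^\ast\in\Re^{m_n}$ with $\max_i\left|(\bPhi\bx_i)'\bbeta^\ast-\bx_i'\bbeta_0\right|$ of order $\epsilon_n^2$ and $\|\bbeta^\ast\|_1$ bounded, but nothing in the hypotheses of Theorem \ref{theorem2} supplies such a $\bbeta^\ast$: conditions (ii)--(iii) only lower bound the energy $||\bPhi\bx||$ and bound $\sum_j|\beta_{j0}|$; they say nothing about whether $\bX\bbeta_0$ lies even approximately in the column space of the $n\times m_n$ matrix $\bX\bPhi'$. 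That would require a Johnson--Lindenstrauss-type inner-product preservation property, e.g.\ $|(\bPhi\bx)'(\bPhi\bbeta_0)-\bx'\bbeta_0|$ small so that $\bbeta^\ast=\bPhi\bbeta_0$ works, which is strictly stronger than anything assumed; with $m_n<n$ the system $(\bPhi\bx_i)'\bbeta^\ast\approx\bx_i'\bbeta_0$, $i=1,\ldots,n$, is overdetermined and in general admits no accurate solution. The paper's proof avoids approximants entirely: writing $DE(1)$ as a scale mixture of normals (Proposition \ref{proppr2}), the conditional prior on the scalar $u=(\bPhi\bx)'\bbeta$ is a mean-zero Gaussian with variance at least $\tau_{min}^2||\bPhi\bx||^2$, and the probability that this Gaussian lands within $\Delta=\epsilon_n^2/(4\eta)$ of $u_0=\bx'\bbeta_0$ is lower bounded through the noncentral $\chi^2$/Skellam representation and a Bessel-function inequality, then integrated over the mixing variables restricted to $\tau_{min}^2\ge 1$. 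This is also where (ii) and (iii) actually enter, differently from your reading: (iii) gives $|u_0|<K$ (since $|x_j|<1$), and the bound $||\bPhi\bx||^2>8(K^2+1)/(n\epsilon_n^2)$ in (ii) makes the Gaussian cost $\exp\left\{-((\bx'\bbeta_0)^2+\Delta^2)/||\bPhi\bx||^2\right\}\ge e^{-n\epsilon_n^2/8}$ --- the prior's own spread, not an approximant, carries mass to the right place. (An approximant-based argument, if its hypotheses were available, would in fact handle uniformity over the $n$ design points more cleanly than the paper's per-$\bx$ bound; but as stated your step cannot be completed from the theorem's assumptions.)
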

\qed
These conditions are similar to those of Theorem \ref{theorem1} and hence we omit further discussion.

From Theorem \ref{theorem1}, \ref{theorem2} and the discussions that follow, it is evident that the convergence rate will be highly dependent on the
rate at which $p_n$ and $m_n$ grow with $n$. Intuitively, a good convergence rate should require some control on the number of non-informative
predictors. This in turn implies that $p_n$ should be bounded by some function of $n$. As far as $m_n$ is concerned, the theory shows that $m_n$ cannot grow above a certain limit. The lower bound on the size of $m_n$ is controlled by the complex dependence of $m_n$ on $\bPhi$ and the predictors through condition (iii). All of these considerations are put together in Corollary \ref{corollary1} to obtain a near parametric convergence rate for the proposed BCR approach. The proof of this corollary
relies on routine algebraic manipulations and is thus omitted.

\begin{corollary}\label{corollary1}
 Consider linear regression, logistic regression or probit regression examples. Assume that $\bbeta$ is assigned a $N(\bzero,\bSigma_{\bbeta})$ prior with
 the largest and smallest eigenvalues of $\bSigma_{\bbeta}$, $\tilde{B}_n,\underline{B}_n$ respectively, satisfying
 $\tilde{B}_n\leq Bm_n^v$, $\underline{B}_n\geq B_1\log(m_n)^{-1}$, for all large enough $n$, for some positive constants $B$, $B_1$ and $v$. Suppose
 $|x_j|\leq 1$ for all $j$. Assume further that $p_n\leq \exp(Cn^{\zeta})$ for some $C>0$ and some $\zeta\in (0,1)$, for all large enough $n$
 and $\lim\limits_{n\rightarrow\infty}\sum\limits_{j=1}^{p_n}|\beta_{j0}|<K<\infty$.
 Assume the conditions on the matrix $\bPhi$, as outlined in Theorem \ref{theorem1} are satisfied for large enough n and the number of rows $m_n$ of $\bPhi$ satisfies
 $\frac{m_n}{\log(n)^{k_1}}\rightarrow 0$, for some $k_1>0$ for all large n. Then we can take the convergence rate in Theorem \ref{theorem1} as
 \begin{equation*}
 \epsilon_n\sim n^{-(1-\zeta)/2}\log(n)^{(k_1+1)/2}.
 \end{equation*}
 \end{corollary}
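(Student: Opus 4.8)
The plan is to verify that, with the explicit choice $\epsilon_n^2 = n^{-(1-\zeta)}\log(n)^{k_1+1}$ (equivalently $\epsilon_n \sim n^{-(1-\zeta)/2}\log(n)^{(k_1+1)/2}$), every hypothesis of Theorem \ref{theorem1} is met, after which the conclusion follows by direct application of that theorem. First I would record the two basic facts $n\epsilon_n^2 = n^{\zeta}\log(n)^{k_1+1}\to\infty$ and $\epsilon_n^2\to 0$, so the standing requirement $0<\epsilon_n^2<1$, $n\epsilon_n^2\to\infty$ holds for large $n$. Condition (ii) of Theorem \ref{theorem1} is assumed verbatim among the corollary's hypotheses (the bounds $\tilde B_n\le Bm_n^v$ and $\underline{B}_n\ge B_1\log(m_n)^{-1}$), and condition (iii) is likewise assumed to hold for the given $\bPhi$; one only needs to note that these are mutually consistent, since $||\bPhi\bx||^2<||\bx||^2\le p_n\le\exp(Cn^\zeta)$ forces $\log(||\bPhi\bx||)/(n\epsilon_n^2)=O(n^\zeta/(n^\zeta\log(n)^{k_1+1}))\to 0$, while the lower bound in (iii) has right-hand side $O(\log(m_n)/(n\epsilon_n^2))\to 0$ and is therefore unrestrictive. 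The substance of the argument is thus the verification of the three limits in (i).

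The first two limits in (i) are immediate: using $m_n=o(\log(n)^{k_1})$ one has $\log(m_n)=O(\log\log n)$ and $\log(1/\epsilon_n^2)=O(\log n)$, so $m_n\log(1/\epsilon_n^2)=o(\log(n)^{k_1+1})$, and both quotients $m_n\log(1/\epsilon_n^2)/(n\epsilon_n^2)$ and $\log(m_n)/(n\epsilon_n^2)$ are dominated by $n\epsilon_n^2=n^\zeta\log(n)^{k_1+1}\to\infty$ and tend to $0$. The crux is the third limit, $m_n\log D(\theta_n\sqrt{8\tilde B_n n\epsilon_n^2})/(n\epsilon_n^2)\to 0$. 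Here I would first invoke the fact (Jiang, 2007), recalled just after Theorem \ref{theorem1}, that for linear, logistic and probit regression $|a'(z)|$ and $|b'(z)/a'(z)|$ grow at most linearly in $|z|$, whence $D(R)$ is polynomially bounded and $\log D(R)=O(\log R)$. It then remains to control $\log R$ for $R=\theta_n\sqrt{8\tilde B_n n\epsilon_n^2}$.

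Writing $\log R=\tfrac12\log(m_np_n)+\tfrac12\log(8\tilde B_n n\epsilon_n^2)$ and substituting $p_n\le\exp(Cn^\zeta)$, $\tilde B_n\le Bm_n^v$, and $n\epsilon_n^2=n^\zeta\log(n)^{k_1+1}$, the dominant contribution is $\tfrac12\log p_n\le\tfrac{C}{2}n^\zeta$, with all remaining terms $O(\log n)$; hence $\log R=O(n^\zeta)$ and $\log D(R)=O(n^\zeta)$. The decisive cancellation is now visible: this $n^\zeta$ matches the $n^\zeta$ inside $n\epsilon_n^2$, leaving $m_n\log D(R)/(n\epsilon_n^2)=O(m_n)/\log(n)^{k_1+1}=o(\log(n)^{k_1})/\log(n)^{k_1+1}=o(1/\log n)\to 0$. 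I anticipate this balancing to be the only delicate point. The exponent $(k_1+1)/2$ in the stated rate is dictated precisely by the need for the denominator factor $\log(n)^{k_1+1}$ to absorb both the $\log D$ growth and the numerator factor $m_n=o(\log(n)^{k_1})$, so any rate with a smaller logarithmic power would fail the third limit in (i). With all of (i)--(iii) verified, Theorem \ref{theorem1} yields $E_{f_0}\pi[d(f,f_0)>4\epsilon_n\given (y_i,\bx_i)_{i=1}^n]\le 4e^{-n\epsilon_n^2/2}$ for large $n$, which is the assertion.
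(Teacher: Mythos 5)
Your proposal is correct: the paper omits this proof as ``routine algebraic manipulations,'' and your verification---checking that $\epsilon_n^2 = n^{-(1-\zeta)}\log(n)^{k_1+1}$ satisfies the standing requirement and the three limits in condition (i) of Theorem \ref{theorem1}, with conditions (ii) and (iii) assumed as hypotheses---is exactly that intended routine argument, including the key cancellation $\log D(R)=O(\log R)=O(n^{\zeta})$ against the $n^{\zeta}$ factor in $n\epsilon_n^2$. One minor caveat: your closing necessity claim is an overstatement, since $m_n/\log(n)^{k_1}\to 0$ implies the smaller power $\log(n)^{k_1}$ in $\epsilon_n^2$ would also satisfy all the limits in (i); but this aside is extraneous to the corollary, which asserts only that the stated rate suffices.
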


A similar result follows from Theorem \ref{theorem2}.
 \begin{corollary}\label{corollary2}
 Consider linear regression, logistic regression or probit regression examples. Assume that $\beta_j$'s are assigned independently $DE(1)$ prior. Suppose
 $|x_j|\leq 1$ for all $j$. Assume further that $p_n\leq \exp(Cn^{\zeta})$ for some $C>0$ and some $\zeta\in (0,1)$, for all large enough $n$
 and $\lim\limits_{n\rightarrow\infty}\sum\limits_{j=1}^{p_n}|\beta_{j0}|<K<\infty$.
 Assume the conditions on the matrix $\bPhi$, as outlined in Theorem \ref{theorem2} are satisfied for large enough n and the number of rows $m_n$ of $\bPhi$ satisfies
 $\frac{m_n}{(log(n))^{k_1}}\rightarrow 0$, for some $k_1>0$ for all large n. Then we can take the convergence rate in Theorem \ref{theorem2} as
 \begin{equation*}
 \epsilon_n\sim n^{-(1-\zeta)/2}\log(n)^{(k_1+1)/2}.
 \end{equation*}
 \end{corollary}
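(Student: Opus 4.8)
The plan is to verify that the proposed rate $\epsilon_n \sim n^{-(1-\zeta)/2}(\log n)^{(k_1+1)/2}$ satisfies every hypothesis of Theorem \ref{theorem2}, so that its conclusion applies verbatim. First I would record the elementary consequences of this choice: squaring gives $\epsilon_n^2 \sim n^{-(1-\zeta)}(\log n)^{k_1+1}$, and since $\zeta < 1$ the polynomial factor dominates the logarithmic one, so $\epsilon_n^2 \to 0$ (hence $0<\epsilon_n<1$ for all large $n$) while $n\epsilon_n^2 \sim n^{\zeta}(\log n)^{k_1+1} \to \infty$, as required at the outset of the theorem. I would also note $\log(1/\epsilon_n^2) \sim (1-\zeta)\log n$, the quantity entering the first piece of condition (i).

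Next I would dispatch the conditions that are not binding. The two easy parts of (i) follow immediately: using $m_n = o((\log n)^{k_1})$ together with $n\epsilon_n^2 \sim n^{\zeta}(\log n)^{k_1+1}$, the ratio $m_n\log(1/\epsilon_n^2)/(n\epsilon_n^2)$ is of order $m_n/(n^{\zeta}(\log n)^{k_1}) \to 0$, and $\log(m_n)/(n\epsilon_n^2)$ is of order $\log\log n/(n^{\zeta}(\log n)^{k_1+1}) \to 0$. For condition (ii), the standardization bound $||\bPhi\bx|| < ||\bx|| \leq \sqrt{p_n} \leq \exp(Cn^{\zeta}/2)$ yields $\log||\bPhi\bx|| = O(n^{\zeta})$, so $\log||\bPhi\bx||/(n\epsilon_n^2) = O(1/(\log n)^{k_1+1}) \to 0$; the companion lower bound $||\bPhi\bx||^2 > 8(K^2+1)/(n\epsilon_n^2)$ is part of the assumed conditions on $\bPhi$ and its right-hand side tends to zero, so it is harmless. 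Condition (iii) is assumed directly.

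The crux is the third part of condition (i), namely $m_n\log D(4\theta_n n\epsilon_n^2)/(n\epsilon_n^2) \to 0$, since this is the only place where the exponential growth of $p_n$ enters, through $\theta_n = \sqrt{m_n p_n}$. Here I would first invoke the fact, recorded in the discussion after Theorem \ref{theorem1}, that for linear, logistic and probit regression $|a'(z)|$ and $|b'(z)/a'(z)|$ grow at most linearly in $|z|$; consequently $D(R)$ is bounded by a fixed polynomial in $R$ and $\log D(R) = O(\log R)$. With $R = 4\theta_n n\epsilon_n^2$ one has $\log R = O(\log m_n + \log p_n + \log(n\epsilon_n^2))$, and the term $\log p_n \leq Cn^{\zeta}$ dominates, giving $\log D(R) = O(n^{\zeta})$. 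Dividing by $n\epsilon_n^2 \sim n^{\zeta}(\log n)^{k_1+1}$ cancels the $n^{\zeta}$ and leaves a quantity of order $m_n/(\log n)^{k_1+1} \to 0$, precisely because $m_n = o((\log n)^{k_1})$. This cancellation is what pins down the exponents in the stated rate: the $n^{\zeta}$ arising from $\log p_n$ forces the polynomial order $n^{-(1-\zeta)/2}$, while the interplay between $m_n$ and the surviving logarithmic factor forces the extra $(\log n)^{(k_1+1)/2}$. Having verified (i)--(iii), I would conclude by applying Theorem \ref{theorem2} to obtain the exponential bound. I expect this third-part-of-(i) estimate to be the main obstacle, both because it is where the $\exp(Cn^{\zeta})$ growth of $p_n$ must be tamed and because the polynomial control of $D(R)$ rests on the case-by-case growth behavior of $a'$ and $b'/a'$ for the three GLM families rather than on any generic argument.
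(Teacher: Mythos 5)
Your verification is correct and is precisely the ``routine algebraic manipulation'' the paper alludes to when omitting this proof: checking that $\epsilon_n \sim n^{-(1-\zeta)/2}(\log n)^{(k_1+1)/2}$ satisfies conditions (i)--(iii) of Theorem \ref{theorem2}, with the only substantive step being that $\log D(4\theta_n n\epsilon_n^2) = O(n^{\zeta})$ (via the at-most-linear growth of $a'$ and $b'/a'$ for the three GLM families and $\log p_n \leq Cn^{\zeta}$), so that the $n^{\zeta}$ cancels against $n\epsilon_n^2 \sim n^{\zeta}(\log n)^{k_1+1}$ and leaves $O(m_n/(\log n)^{k_1+1}) \to 0$. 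Your handling of the remaining conditions (the easy parts of (i), the upper bound in (ii) via $||\bPhi\bx|| < ||\bx|| \leq \sqrt{p_n}$, and the assumed lower bound on $||\bPhi\bx||$) matches the paper's framework, so there is nothing to add.
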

 \qed

\section{Simulation Study}

In this section we compare the out-of-sample predictive performance of model averaged BCR to that of Ridge Regression (RR), Lasso (Tibshirani, 1996),
partial least squares regression (PLSR), Bayesian Lasso (BL; Park et al, 2008), generalized double Pareto (GDP; Armagan et al, 2012), and Bridge regression (BR).  We also consider an alternative implementation of our compression idea in which instead of using conjugate NIG priors with model averaging over the projection matrix, we generate a single projection matrix, with shrinkage priors specified for the coefficients on the compressed predictors.  Following this strategy, we applied compressed versions of Bayesian Lasso (CBL) and generalized double Pareto (CGDP).   These methods are slower than BCR in relying on MCMC but are massively faster than applying MCMC with shrinkage priors to the original data when $p$ is enormous.  As a default in these analyses, we use $m=40$, which seems to be a reasonable choice of upper bound for the dimension of the linear subspace to compress to.

To implement BCR, we set $\bSigma_{\bbeta}$ to be the identity matrix, which satisfies the restrictions in Corollary \ref{corollary1}. The model averaging step in BCR requires choice of a window over the possible dimensions $m$.
Motivated by the theory in Section~\ref{sec:convrate}, we choose the window as $[\ceil{2*log(p)},min(n,p)]$ which implies that the number of possible models to be averaged across is $s=min(n,p)-\ceil{2*log(p)}$. For MCMC based model implementations, we discard the first 2,500 samples as a burn-in and draw inference based on 7,500 samples.

{\noindent \emph{Moderate dimension cases}}\\
We first consider moderately large $p$ and $n$ cases, assessing how sparsity and changing number of samples impact performance.  We generate  observations
from the standard linear regression model with $p=100$ predictors generated from the normal distribution having $\mbox{cor}(x_j, x_j')=0.5^{|j-j'|}$.
We consider the following scenarios.
\newline
\newline
\emph{Model 1}: First 5 regression coefficients are 1.2, the rest are zero and $\sigma^2=1,\:n=70$.\\
\emph{Model 2}: First 5 regression coefficients are 1.2, the rest are zero and $\sigma^2=1,\:n=110$.\\
\emph{Model 3}: First 15 regression coefficients are 1, the rest are zero and $\sigma^2=1,\:n=70$.\\
\emph{Model 4}: First 15 regression coefficients are 1, the rest are zero and $\sigma^2=1,\:n=110$.\\
\emph{Model 5}: All the regression coefficients are $0.2$ and $\sigma^2=1,\:n=70$.\\
\emph{Model 6}: All the regression coefficients are $0.2$ and $\sigma^2=1,\:n=110$.
\newline
\newline
The last two scenarios are referred to as the \emph{dense} case. Although we have been focusing on settings in which sparsity is justified in our theory and motivation, it is also instructive to compare performance of BCR to its competitors in the more general case in which the $p$-dimensional predictors can be compressed to a much lower dimensional linear subspace without loosing much information about the response $y_i$.  These last two cases correspond to a one dimensional subspace with no sparsity. \emph{Dense} cases are motivated by the practical applications where each of the covariates has small effect on the outcome.

In our experiments $\by$ and $\bX$ are centered and the columns of $\bX$ are standardized to have unit variance. To implement LASSO, RR, BR and PLSR we used \texttt{lars} (Hastie et al., 2012), \texttt{MASS} (Ripley et al., 2012), \texttt{monomvn} (Gramacy, 2010) and \texttt{pls} packages in \texttt{R} respectively. As a default choice suggested in Armagan et al (2012), we fix hyperparameters $\alpha=\eta=1$ for GDP and CGDP. In BL and CBL we put a Gamma($1,1$) prior on the Lasso penalty.

In each of the six scenarios, we simulate 100 datasets. Table~\ref{tab1} presents the MSPE averaged over these simulated datasets where in each dataset MSPE is calculated over the same number of held-out observations as the number of training cases. The values in the subscripts represents bootstrap standard errors for the averaged MSPEs. This is calculated by generating 500 bootstrap samples from the 100 MSPE values, finding averaged MSPE in each of these 100 datasets, and then computing its standard error.
\begin{table}[!th]
{\scriptsize
\begin{center}
\caption{Out of sample $MSPE\times .1$ for the competing approaches with bootstrap $se\times .1$ in the subscript, with columns 1-6 corresponding to results under Models 1-6, respectively.}\label{tab1}
\begin{tabular}
[c]{ccccccc}%
\hline
&\multicolumn{2}{c}{Sparsity level 5} &\multicolumn{2}{c}{Sparsity level 15} & \multicolumn{2}{c}{Dense model}\\
\cline{2-7}
 & $n=70$ & $n=110$ & $n=70$ & $n=110$ & $n=70$ & $n=110$\\
\cline{2-7}
          & & & & & &\\
BCR       & $0.93_{0.023}$ & $0.47_{0.019}$ & $1.54_{0.074}$ & $0.46_{0.01}$ & $0.12_{0.002}$  & $0.11_{0.004}$\\
          & & & & & &\\
CGDP       & $1.23_{0.034}$ & $0.91_{0.042}$ & $2.37_{0.12}$ & $1.76_{0.08}$ & $0.24_{0.012}$ & $0.21_{0.021}$\\
          & & & & & &\\
CBL      & $1.08_{0.030}$ & $0.85_{0.040}$ & $2.09_{0.10}$ & $1.68_{0.07}$ & $0.22_{0.011}$ & $0.20_{0.018}$\\
          & & & & & &\\
GDP        & $0.39_{0.011}$ & $0.36_{0.013}$ & $0.46_{0.029}$ & $.36_{0.01}$ & $0.52_{0.010}$ & $0.47_{0.024}$\\
          & & & & & &\\
BL       & $0.30_{0.007}$ & $0.21_{0.006}$ & $0.40_{0.018}$ & $0.23_{0.007}$ & $0.39_{0.007}$ & $0.29_{0.011}$\\
          & & & & & &\\
BR       & $0.69_{0.010}$ & $0.25_{0.008}$ & $1.36_{0.060}$ & $0.28_{0.009}$ & $0.45_{0.008}$ & $0.21_{0.007}$\\
         &  &  &  & & & \\
LASSO     & $0.13_{0.003}$ & $0.25_{0.029}$ & $0.19_{0.011}$ & $0.70_{0.068}$ & $0.49_{0.008}$ & $0.64_{0.040}$\\
          & & & & & &\\
RR        & $0.42_{0.009}$ & $0.25_{0.011}$ & $0.51_{0.020}$ & $0.25_{0.008}$ & $0.37_{0.007}$ & $0.23_{0.025}$\\
          & & & & & &\\
PLSR      & $0.34_{0.007}$& $0.22_{0.008}$& $0.51_{0.023}$& $0.29_{0.010}$ & $0.22_{0.004}$ &$0.17_{0.005}$\\
\hline
\end{tabular}
\end{center}
}
\end{table}

From Table~\ref{tab1} it is evident that in the dense cases (\emph{Model 5},\emph{Model 6}), performance of BCR is significantly better than the competing shrinkage and sparsity inducing approaches. Additionally, BCR yields remarkably better MSPE than  partial least square regression (PLSR) under \emph{Model 5}, which particularly favors the usage of PLSR. In sparse cases (\emph{Model 1},\emph{Model 2}) all the sparsity favoring approaches such as LASSO, BL, GDP work better than BCR.

As sample size increases, performance of BCR improves along with its competitors. For cases with higher sample size (\emph{Model 2},\emph{Model 4}), BCR shows competitive performance with GDP and BR. We repeated all the experiment after increasing signal to noise ratio and found similar ordering in their performances.

\begin{table}[!th]
{\scriptsize
\begin{center}
\caption{Median lengths of 95\% predictive intervals for the competing approaches.}\label{tab3}
\begin{tabular}
[c]{ccccccc}%
\hline
&\multicolumn{2}{c}{Sparsity level 5} &\multicolumn{2}{c}{Sparsity level 15} & \multicolumn{2}{c}{Dense model}\\
\cline{2-7}
 & $n=70$ & $n=100$ & $n=70$ & $n=100$ & $n=70$ & $n=100$\\
\cline{2-7}
          & & & & & &\\
BCR       & $6.06(4.99,7.22)$ & $4.49(3.54,6.46)$ & $7.89(6.45,9.61)$ & $5.30(4.65,5.96)$ &  $4.06(3.43,4.83)$& $4.02(3.56,4.68)$\\
          & & & & & &\\
CGDP       & $10.46(8.52,12.61)$ & $10.57(7.27,12.01)$ & $13.90(10.91,18.29)$ & $14.43(10.82,17.20)$ & $4.70(3.58,7.85)$ & $4.35(3.71,7.57)$\\
          & & & & & & \\
CBL      & $10.98(9.18,12.92)$ & $10.90(7.63,12.32)$ & $14.63(11.83,18.90)$ & $14.98(11.38,17.83)$ & $5.01(3.82,8.20)$ & $4.52(3.86,7.89)$\\
          & & & & & & \\
GDP        & $3.92(2.87,5.51)$ & $4.00(2.87,4.95)$ & $5.33(3.11,6.97)$ & $4.66(3.49,5.50)$ & $6.16(4.81,7.59)$ & $4.56(3.39,5.27)$\\
          & & & & & & \\
BL       & $4.89(3.77,6.12)$ & $4.56(3.68,5.20)$ & $7.04(5.75,7.95)$  & $5.58(4.44,6.36)$ & $6.06(5.09,6.91)$ & $5.31(4.38,5.88)$\\
          & & & & & & \\
BR       & $17.16(15.49,18.72)$ & $6.25(4.79,7.10)$ & $21.16(19.96,23.17)$ & $7.27(5.88,8.25)$ & $14.57(12.37,16.38)$ & $5.68(4.75,6.55)$\\
         &  & & & & &\\
LASSO   & $2.70(1.32,3.46)$  & $5.85(3.59,9.72)$ & $2.17(0.86,2.72)$ & $8.86(4.01,13.51)$ & $0.77(0.01,2.84)$ & $6.28(1.79,9.82)$\\
        &   &  & & & &\\
RR      & $0.03(0.02,0.05)$  & $2.18(0.80,2.82)$ & $0.04(0.02,0.06)$ & $2.25(1.35,3.01)$ & $0.05(0.02,0.06)$ & $2.29(0.77,2.62)$\\
        &   &  &  & & &\\
PLSR   &  $2.76(2.31,3.29)$  & $3.05(2.71,3.51)$ &  $3.26(2.80,3.86)$ & $3.64(3.09,4.05)$ & $2.37(1.87,2.83)$ &$2.82(2.51,3.30)$\\
\hline
\end{tabular}
\end{center}
}
\end{table}

Figure~\ref{syn-surfs} shows empirical coverage probabilities of 95\% predictive intervals for the competing Bayesian models in six scenarios.  BCR and GDP have under-coverage in the sparse cases with low sample size (\emph{Model 1},\emph{Model 3}) while the other compressed models have satisfactory coverage. BCR, however, shows excellent coverage in the dense case with low sample size.  For the frequentist point estimation approaches, we use a two stage approach:  (i) estimate regression coefficients in the first stage; (ii) construct 95\% PIs  based on the normal distribution centered on the predictive mean from the regression model with variance equal to the estimated variance in the residuals. The median coverage of LASSO and PLSR  are found to be 74\% and 54\% respectively in \emph{Model 1}  while RR shows severe under-coverage. Coverage of LASSO becomes much worse (55\% for \emph{Model 3} and 13\% for \emph{Model 5}) as sparsity decreases. The coverage of PLSR increases marginally as the sparsity decreases.  As sample size increases, predictive coverage of BCR improves. In the dense case with higher sample size, BCR produces coverage closest to 95\%. The coverage of LASSO, PLSR and RR also increases with increasing sample size.

The median length of 95\% PIs for each of the methods
are provided in Table~\ref{tab3}. Among the competing Bayesian methods, BCR and GDP have the shortest 95\% predictive intervals in sparse cases while all the compressed models have wider predictive intervals. This explains under-coverage of BCR and GDP compared to the other competitors. In the dense case, compressed approaches have narrow predictive intervals and better coverage.  The frequentist intervals are narrower in general, likely as a result of ignoring uncertainty in parameter estimation. With increasing sample size, PI's for BCR shrinks in size while maintaining better coverage. It is promising that BCR has such competitive performance even without considering computation time; in the next simulation cases, we consider much higher dimensional cases and computation time comparisons.

\begin{figure}[!ht]
  \begin{center}
    \subfigure[\emph{Model 1}]{\includegraphics[width=8cm]{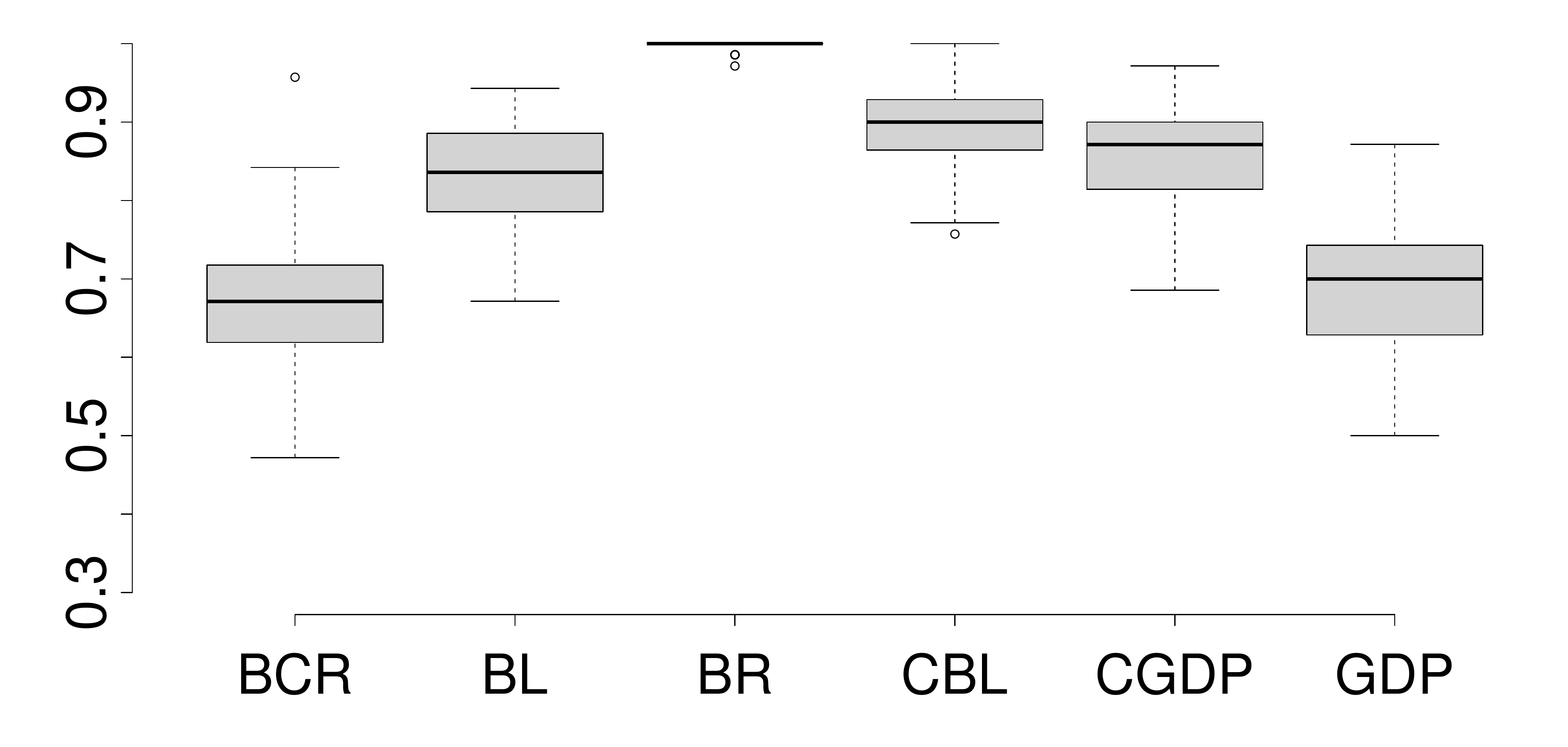}\label{syn-y0}}
    \subfigure[\emph{Model 2}]{\includegraphics[width=8cm]{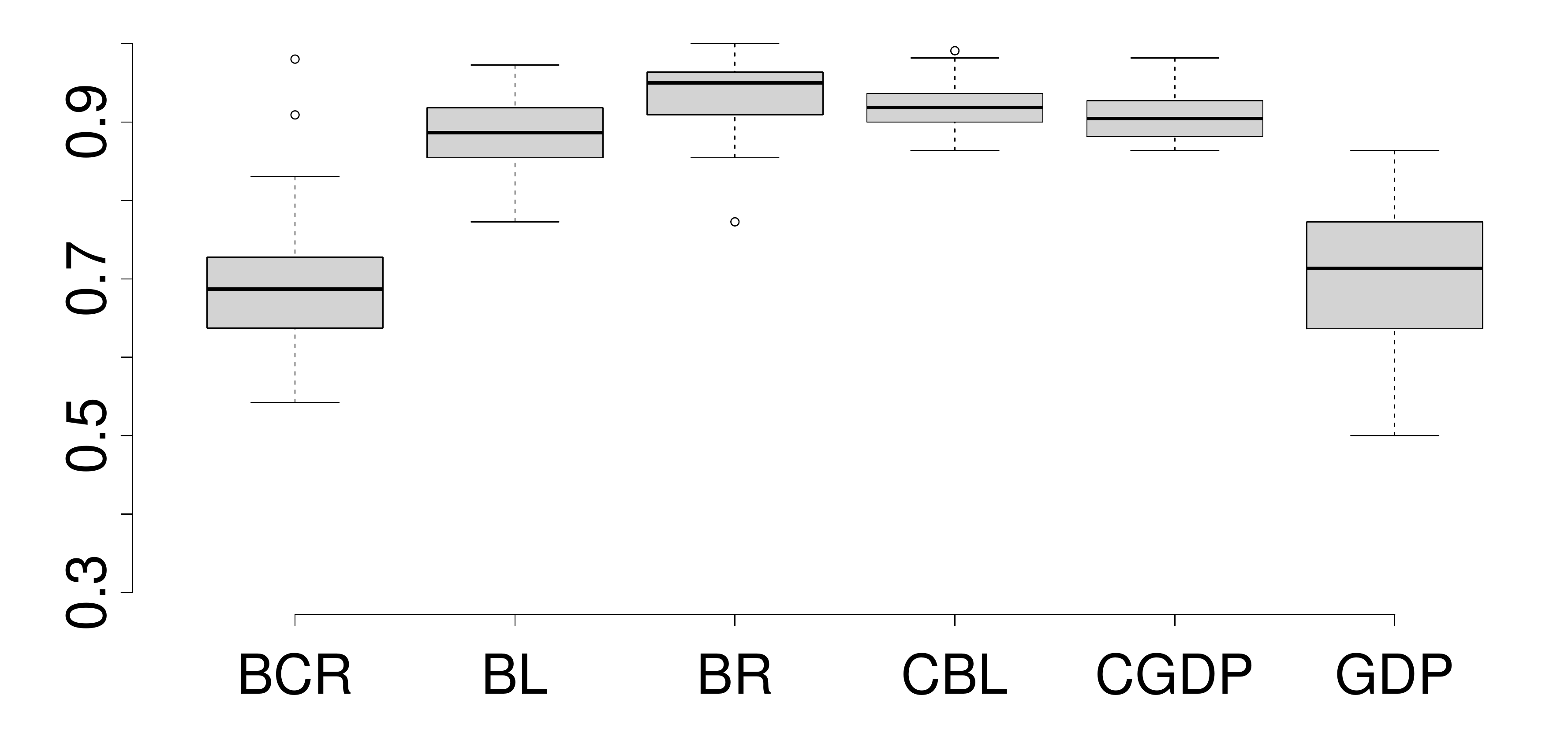}\label{syn-y1}}\\
    \subfigure[\emph{Model 3}]{\includegraphics[width=8cm]{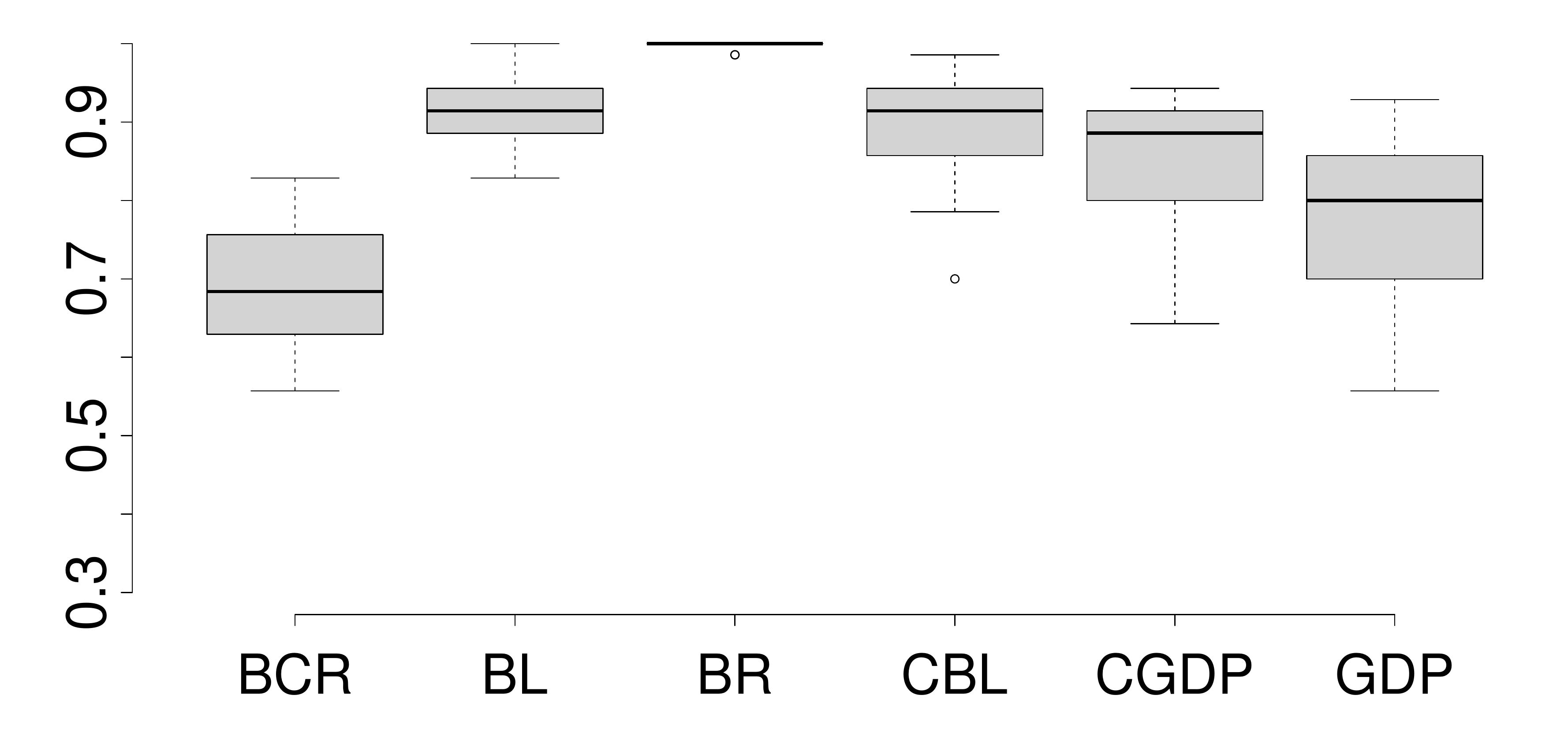}\label{syn-A10}}
    \subfigure[\emph{Model 4}]{\includegraphics[width=8cm]{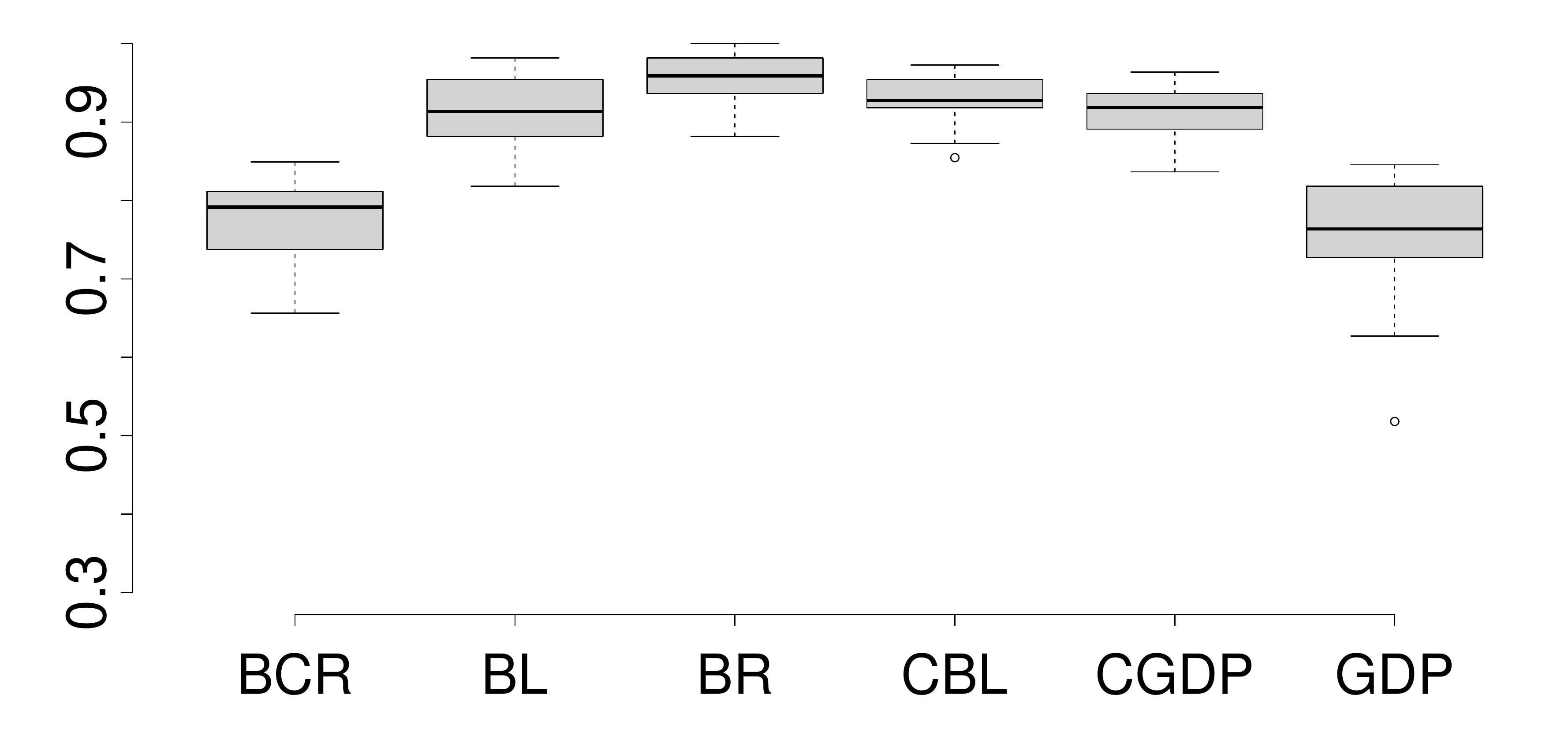}\label{syn-A11}}\\
    \subfigure[\emph{Model 5}]{\includegraphics[width=8cm]{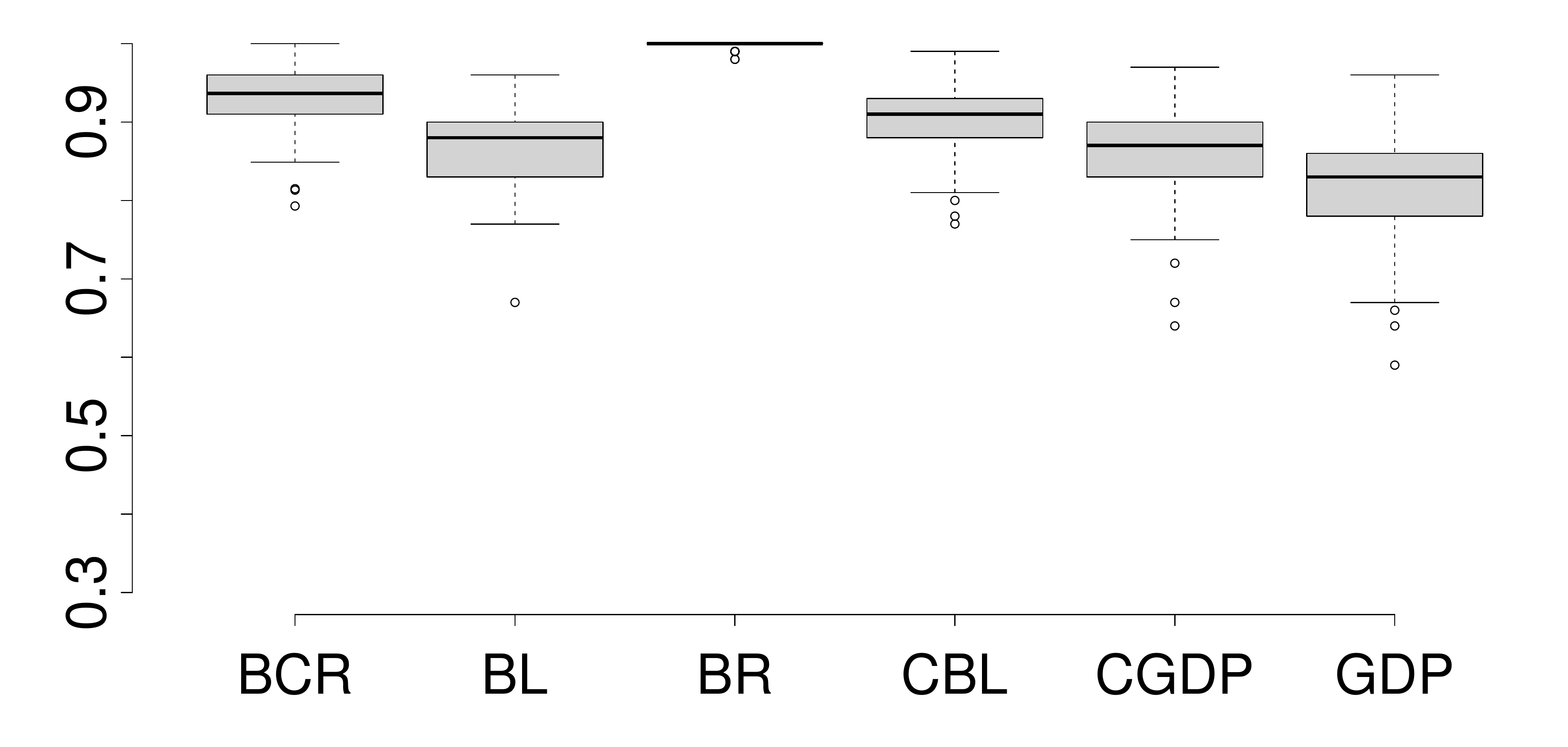}\label{syn-A00}}
    \subfigure[\emph{Model 6}]{\includegraphics[width=8cm]{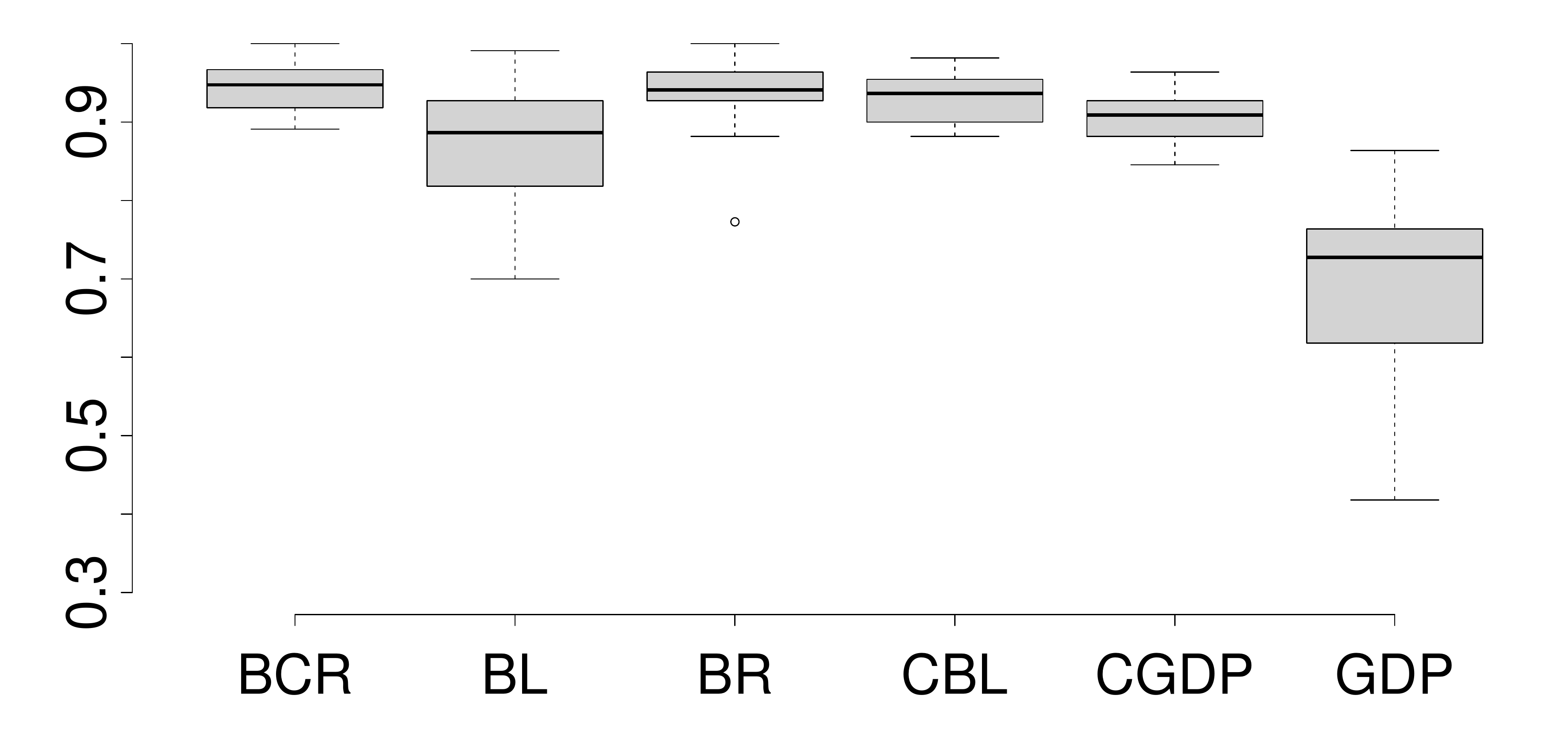}\label{syn-rho01}}
  \end{center}
\caption{Empirical coverage probability of 95\% predictive intervals for all the competing models.}\label{syn-surfs}
\end{figure}

{\noindent \emph{High dimension cases}}\\
To assess performance in much higher dimensional cases, we conduct a second set of simulation studies with $n=110$ and $p=\{15,000, 20,000, 25,000\}$. The MCMC-based BL, GDP and BR implementations become increasingly prohibitive as $p$ increases unless compression is employed. Hence, we use compressed versions of Bayesian Lasso (CBL), Generalized double Pareto (CGDP) and Bridge regression (CBR) as competitors. The following two data generating models are considered.\\
\newline
\emph{Model 1}: First 5 regression coefficients are 1, the rest are zero and $\sigma^2=1$.\\
\emph{Model 2}: All the regression coefficients are .1 and $\sigma^2=1$.\\
\newline

In each scenario, we simulate 100
data sets. Table~\ref{tab2} presents the MSPE averaged over these simulated datasets where in each dataset MSPE is calculated over 110 held-out observations. The values in the subscript represents bootstrap standard errors for the averaged MSPEs.
Table~\ref{tab2} shows the mean squared out-of-sample prediction errors along with their bootstrap standard errors for each of these models with two different sparsity levels.

\begin{table}[!th]
{\scriptsize
\begin{center}
\caption{\scriptsize{MSPE$\times .1$  for the competing models along with their bootstrap sd $\times .1$}}\label{tab2}
\begin{tabular}
[c]{ccccccc}%
\hline
&\multicolumn{3}{c}{Sparsity level 5} &\multicolumn{3}{c}{dense model}\\
\cline{2-7}
(n,p) & (110,15000) & (110,20000) & (110,25000) & (110,15000) & (110,20000) & (110,25000)\\
\cline{2-7}
          & & & & & &\\
BCR       & $0.82_{0.020}$ & $0.78_{0.011}$ & $0.80_{0.024}$ & $0.24_{0.01}$ & $0.28_{0.012}$  & $0.32_{0.01}$\\
          & & & & & &\\
CGDP      & $0.84_{0.013}$ & $0.86_{0.013}$ & $0.85_{0.015}$ & $3.23_{0.45}$ & $3.73_{0.59}$ & $5.99_{0.92}$\\
          & & & & & &\\
CBL      & $0.77_{0.011}$ & $0.79_{0.011}$ & $0.77_{0.012}$ & $3.12_{0.42}$ & $3.59_{0.55}$ & $5.70_{0.63}$\\
          & & & & & &\\
CBR       & $0.60_{0.007}$ & $0.61_{0.008}$ & $0.60_{0.007}$ & $2.93_{0.39}$ & $3.32_{0.43}$ & $5.21_{0.76}$\\
          & & & & & &\\
LASSO     & $0.22_{0.003}$ & $0.21_{0.003}$ & $0.19_{0.002}$ & $16.34_{0.18}$ & $21.98_{0.24}$ & $25.34_{0.38}$\\
          & & & & & &\\
RR        & $0.58_{0.006}$ & $0.59_{0.008}$ & $0.59_{0.007}$ & $15.12_{0.18}$ & $19.89_{0.24}$ & $24.95_{0.33}$\\
          & & & & & &\\
PLSR      & $0.58_{0.006}$ & $0.59_{0.008}$ & $0.59_{0.007}$ & $15.12_{0.19}$ & $19.89_{0.23}$ & $24.95_{0.32}$\\
\hline
\end{tabular}
\end{center}
}
\end{table}
If the true model is sparse, LASSO, RR and CBR are the three best performing methods with BCR showing competitive performance. As sparsity decreases, all the sparsity favoring models perform poorly.  In the dense case, all the compressed models perform significantly better than the corresponding sparsity favoring models. BCR, in particular, shows excellent performance in this scenario.  Figure~\ref{par2} shows boxplots for the empirical coverage probabilities of 95\% PIs for all the Bayesian models. BCR has coverage probabilities between 90-98\% in all cases. Other compressed regression models are also found to deliver excellent coverage probabilities.
We also compute the coverage probabilities of LASSO, RR and PLSR using the two stage plug-in approach. When the sparsity is low, the median coverage probabilities (with 95\% CIs) of LASSO are $.75 (.27, .85)$, $.63 (.31, .84)$ and $.47 (.02,.79)$ for $p=15,000, 20,000, 25,000$ respectively. For the dense cases, the coverage probabilities of LASSO are  $.93 (.22, .97)$, $.90 (.19, .95)$ and $.93 (.13,.96)$ in $p=15,000, 20,000, 25,000$ respectively. RR and PLSR are found to suffer from severe under-coverage.

\begin{figure}[!ht]
  \begin{center}
    \subfigure[$p=15000$, sparsity 5]{\includegraphics[width=8cm]{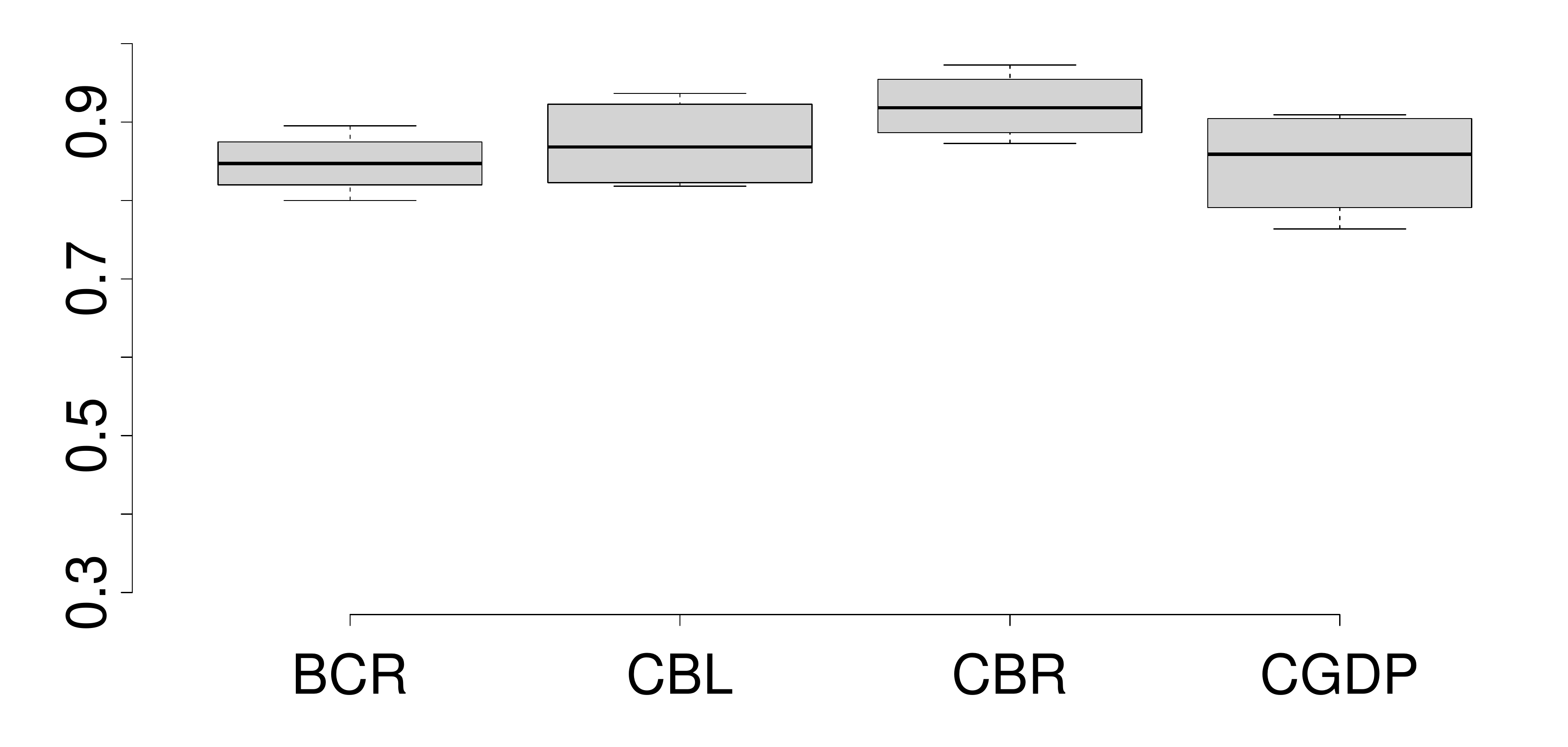}\label{sp1}}
    \subfigure[$p=15000$, dense]{\includegraphics[width=8cm]{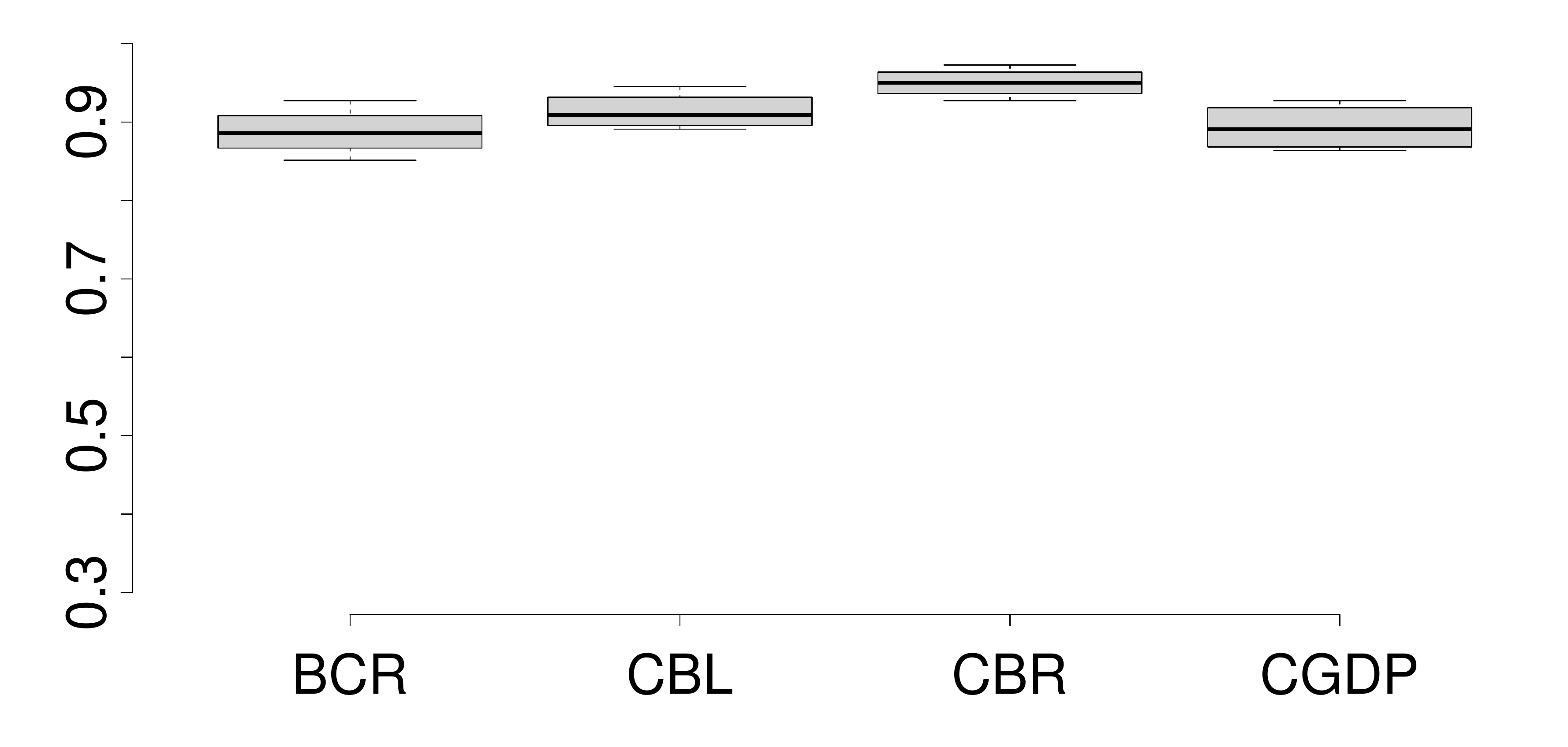}\label{dp1}}\\
    \subfigure[$p=20000$, sparsity 5]{\includegraphics[width=8cm]{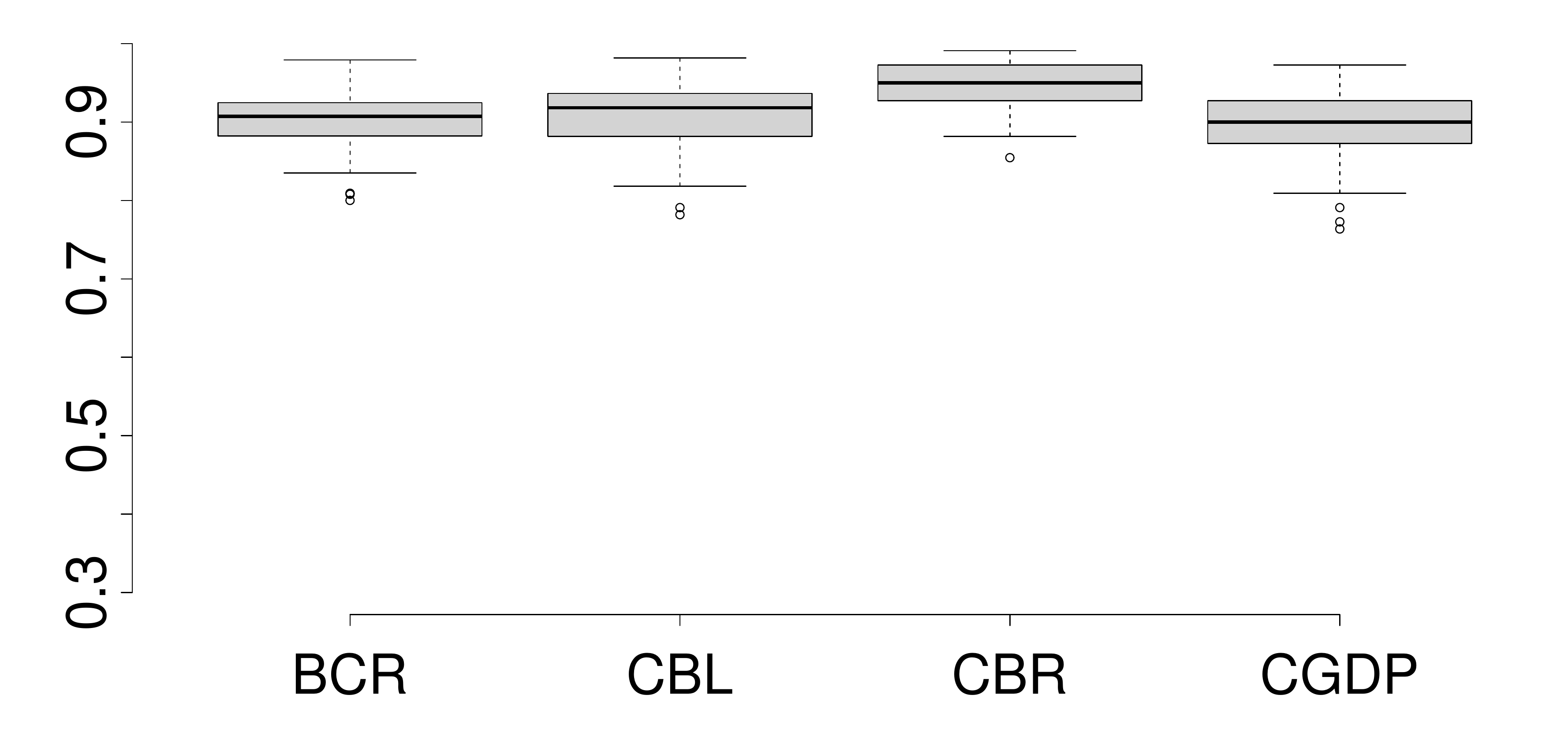}\label{sp2}}
    \subfigure[$p=20000$, dense]{\includegraphics[width=8cm]{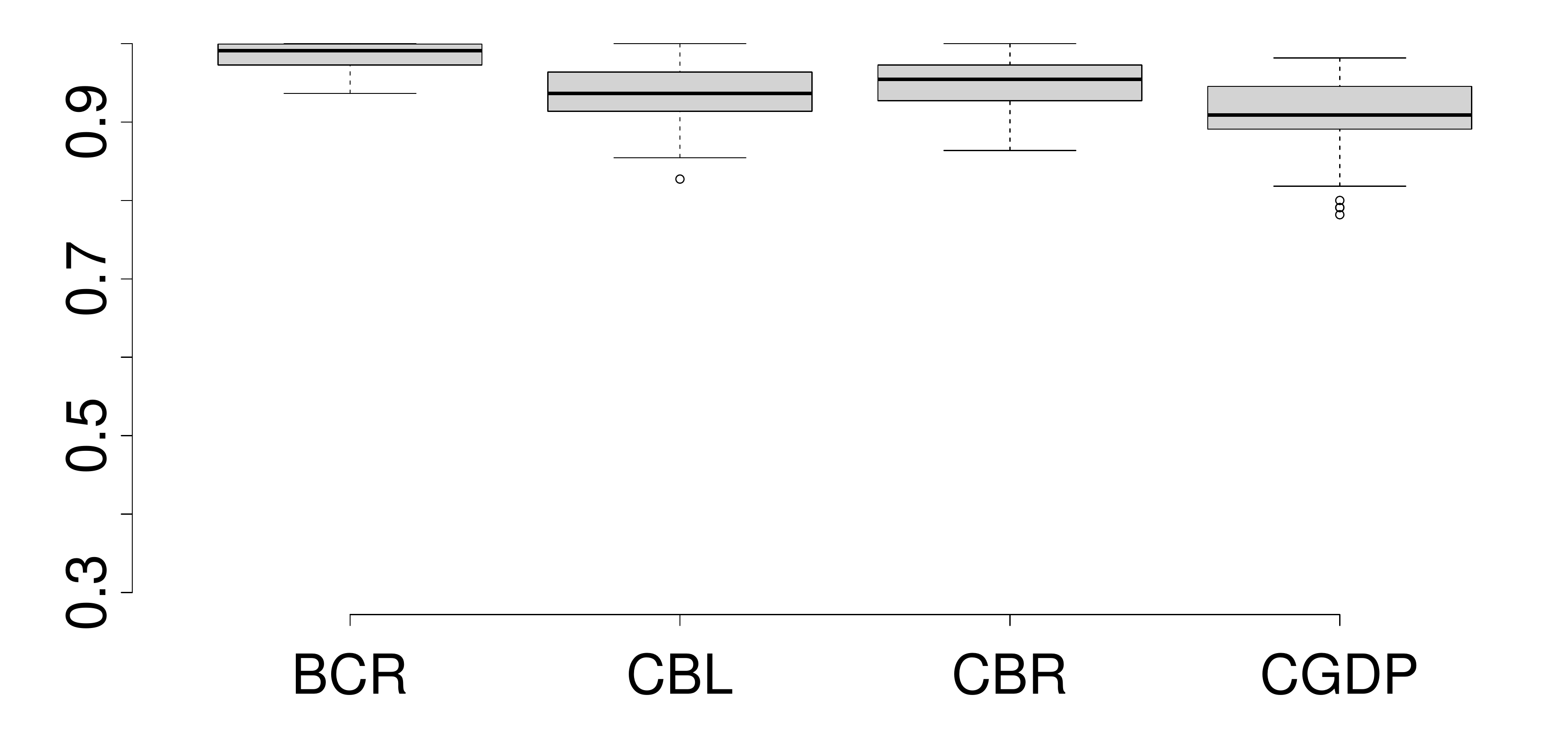}\label{dp2}}\\
    \subfigure[$p=25000$, sparsity 5]{\includegraphics[width=8cm]{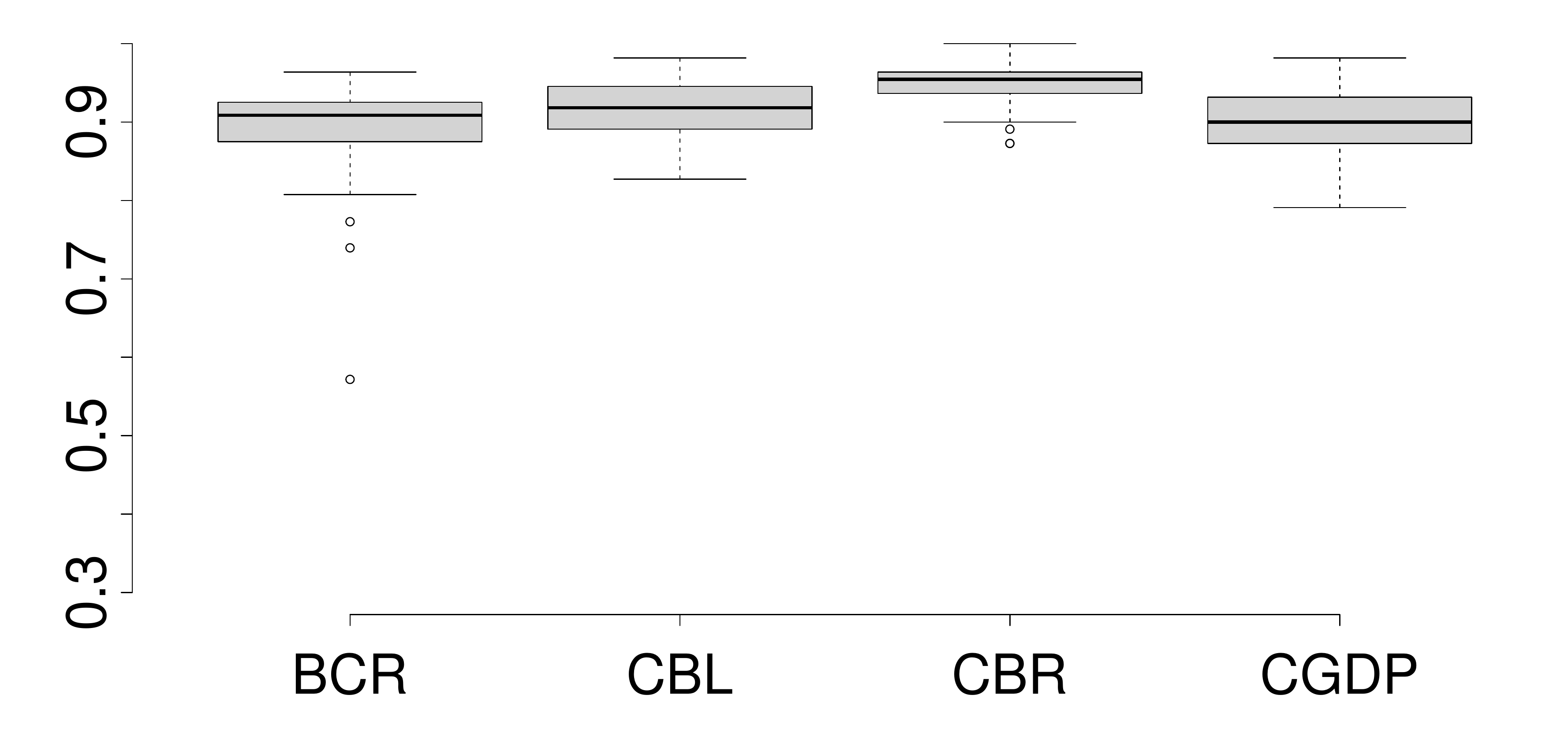}\label{sp3}}
    \subfigure[$p=25000$, dense]{\includegraphics[width=8cm]{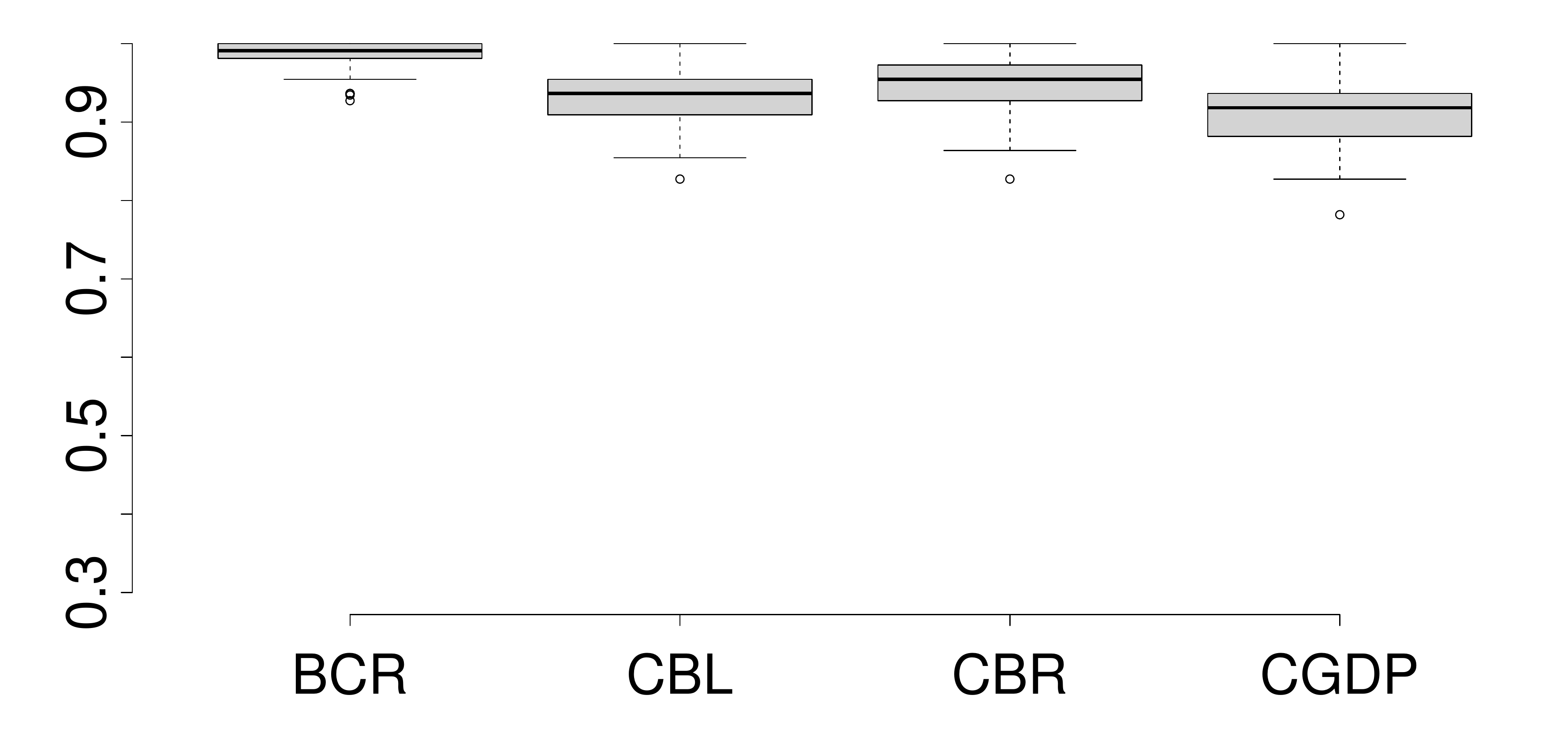}\label{dp3}}
  \end{center}
\caption{Empirical coverage probability of 95\% predictive intervals for all the competing models.}\label{par2}
\end{figure}

{\noindent \emph{Computational Speed}}\\
A crucial consideration is comparing alternative methods is computing time.  The approach of applying MCMC to the compressed data, which is employed in CBL, CGDP and CBR, is reasonably fast to implement.  Using non-optimized R code implemented on a single 3.06-GHz
Intel Xeon processor with 4.0 Gbytes of random-access memory running Debian LINUX, the computing time for 10,000 iterations of CGDP and CBL in the $n=110$ and $p=25,000$ cases was only 23.9 seconds.

One of the major advantages of BCR is the rapid (often essentially instantaneous) computing time. Important contributors to computation time include data compression, which involves Gram-Schmidt orthogonalization of $m$ rows of an $m\times p$ matrix,  multiplying an $n \times p$ and $p \times m$ matrix, as well as the time to invert low dimensional $m \times m$ matrices in the process of calculating the posterior and posterior weights.  If the sample size $n$ is not large, these inversions are very quick.  Only the matrix multiplies and the Gram-Schmidt orthogonalization involved in the compression convey increasing burden with increasing $p$.  Given that the whole model averaging process is embarrassingly parallelizable over different choices of $m$, the computation can be done very quickly using a parallel implementation with sufficiently many processors. Even not exploiting any parallelization, one obtains results quickly using non-optimized R code on a single 3.06-GHz Intel Xeon processor with 4.0 Gbytes of random-access memory running Debian LINUX.

Figure~\ref{timecomp} shows the computational speed comparison between BCR and LASSO for $n=110$. Figure~\ref{timecomp} indicates that when $p$ is small, BCR enjoys little computational advantage over LASSO. The advantage is substantial as $p$ increases; this becomes particularly notable as we scale from tens of thousands of predictors to millions or more, which is becoming increasingly common.

\begin{figure}[!ht]
  \begin{center}
  \includegraphics[width=16cm]{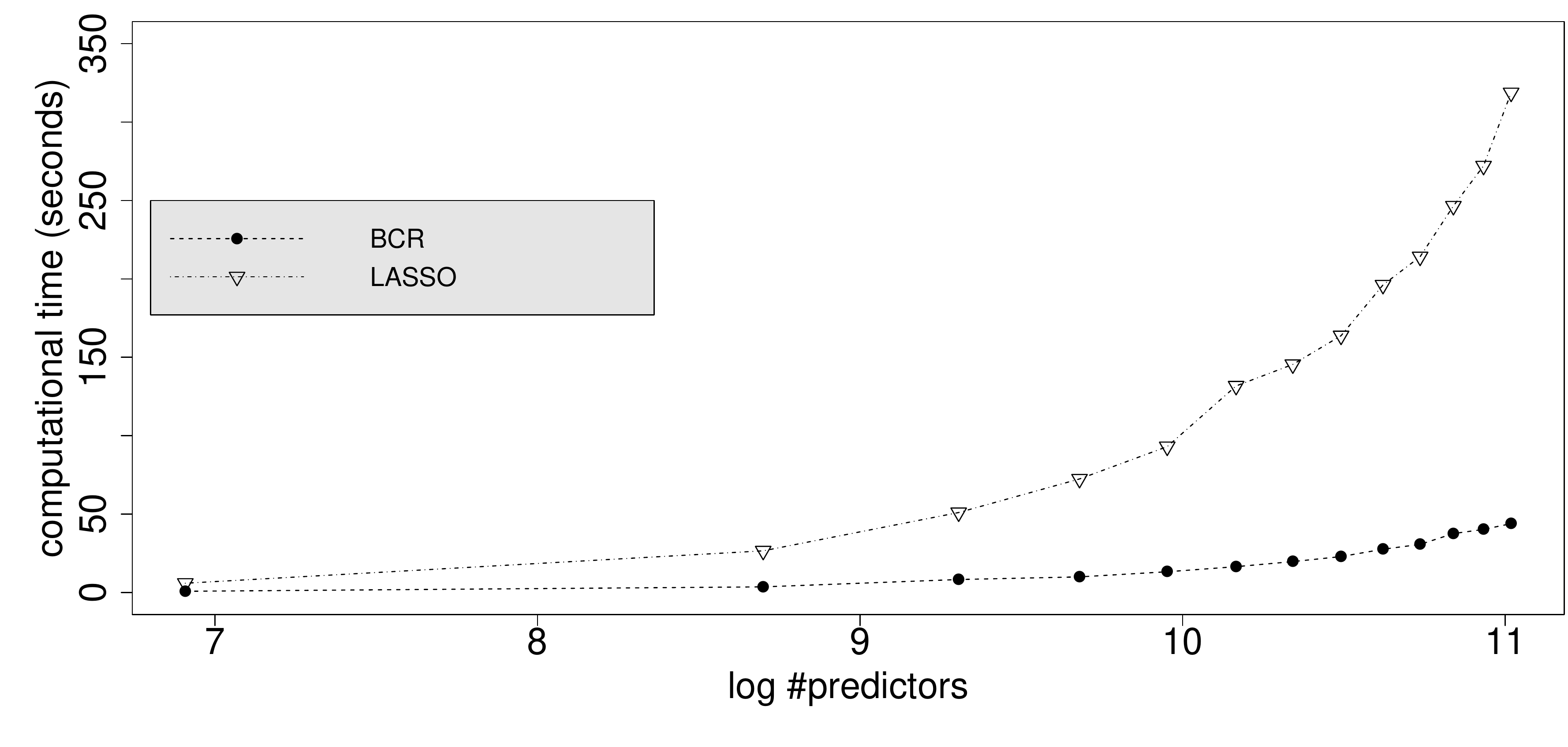}
  \end{center}
\caption{Computational time in seconds for LASSO and BCR against log of the number of predictors.}\label{timecomp}
\end{figure}
These computational comparisons are using a parallel implementation of BCR. This can be further improved parallelizing the matrix multiplies involved in the initial data compression and Gram-Schmidt orthogonalization.

\section{Molecular Epidemiological Study}

We apply Bayesian compressed regression to data from a molecular epidemiology study. The focus  is on assessing gene-environment interactions between chemical exposures and  variants in genes thought to be important in DNA damage and repair pathways.  Immortalized cell lines were established for 90 individuals chosen to represent the ethnic mix in the United States.  For these individuals, single nucleotide polymorphisms (SNPs)  in  DNA damage and repair genes were measured; in particular, there were 42,181 SNPs discarding loci for which there is no variability among individuals in the sample.  Replicated samples over 100 cells from each
cell line were allocated to 1 of 3 groups: (i) analyzed without treatment, (ii) analyzed immediately after exposure to a known genotoxic agent (MMS, $H_2O_2$), or (iii) analyzed after allowing some time (10, 15, 240 minutes) for DNA repair. We will code these groups as ``NT", ``0'' and ``Later" respectively.

The frequency of DNA strand breaks was measured for each cell using single cell gel electrophoresis, which is also known as the comet assay.  When subject to electrophoresis, the nucleoid of cells with many strand breaks pull apart, changing in shape from a ball to resemble a comet.  Comet assay image processing software produces multiple measures of the amount of DNA in the comet tail, which provides a useful surrogate.  The Olive tail moment (Olive et al., 1990) has been established as the best of the surrogates in previous studies (Dunson et al., 2003), and is defined as  the percentage of DNA in the tail of
the comet multiplied by the length between the center of the comet's head and tail.

For each cell line for each group and for each genotoxic agent (MMS, $H_2O_2$), we compute 33 quantiles of the Olive tail moments. For cell line $i$, group
$j$ ($j=NT,0,Later$) and agent $k$ ($k=MMS,H_2O_2$), $y_{i,j,k}=(y_{i,j,k,1},...,y_{i,j,k,33})'$ is the vector of 33 quantiles of Olive tail moments. We derive two sets of response variables from the $y_{i,j,k}$'s. Let
\begin{enumerate}
\item $O_{i,k}=y_{i,0,k}-y_{i,NT,k}=(O_{i,k,1},...,O_{i,k,33})'=$ change in DNA damage from ``NT" to 0.
\item $J_{i,k}=y_{i,0,k}-y_{i,Later,k}=(J_{i,k,1},...,J_{i,k,33})'=$ change in DNA damage from 0 to ``Later".
\end{enumerate}
While $O_{i,k}$'s measure an individual's sensitivity to DNA damage induced by toxic reagents, $J_{i,k}$'s quantify an individual's repair rate after damage.
Predictors include the SNPs ($x_i$) and the type of genotoxic agent, $E_{i,k,l} = 1$ for $H_2O_2$ and $0$ for MMS.  We consider the following linear
regression models:
\begin{equation}
O_{i,k,l}=\alpha E_{i,k,l}+x_i'\gamma_1+E_{i,k,l}x_i'\gamma_2+\epsilon_{i,k,l},\:\epsilon_{i,k,l}\sim N(0,\sigma^2).
\end{equation}
A similar set of $33$ independent models have been fitted using the responses $J_{i,k,l}$s.  Each of these models includes main effects for the SNPs and interactions between the SNPs and exposure type, leading to $p=84,363$ predictors.

Analyses are conducted as in the simulation section, with 10-fold cross validation used.  We implemented all competing methods for each of the
 33 quantiles for the two sets of responses, but  the standard package used to implement LASSO (lars) failed to converge even when we modified tuning parameters, seemingly due to the dimensionality and discrete nature of the predictors.  Hence, we do not report results for LASSO.  To get LASSO to run, we used a randomly selected subsample of 30,000 SNPs, and in those analyses we found substantially improved out of sample predictive performance for our compressed regression methods compared with LASSO and the other competitors uniformly across quantiles.  We do not present those results here but focus on analysis of the complete data set.

Below we present squared correlation between the
observed and the fitted responses for some of the initial quantiles. This is an indicator of the amount of variability in the response explained by the predictors. It is evident both from Table~\ref{tab5} and Table~\ref{tab6} that all the compressed models show high correlation between observed and predicted responses, although compressed Bridge regression (CBR) performs better than the other competitors uniformly. PLS performs similar to the BCR and RR shows worse performance than BCR. The figures in the brackets present the length of the 95\% PI's for the competing methods. We fit responses corresponding to the higher quantiles and found decreasing squared correlation between observed and fitted responses.

\begin{figure}[h]
  \begin{center}
    \subfigure[quantiles for $\bO_{i,k}$]{\includegraphics[width=12cm]{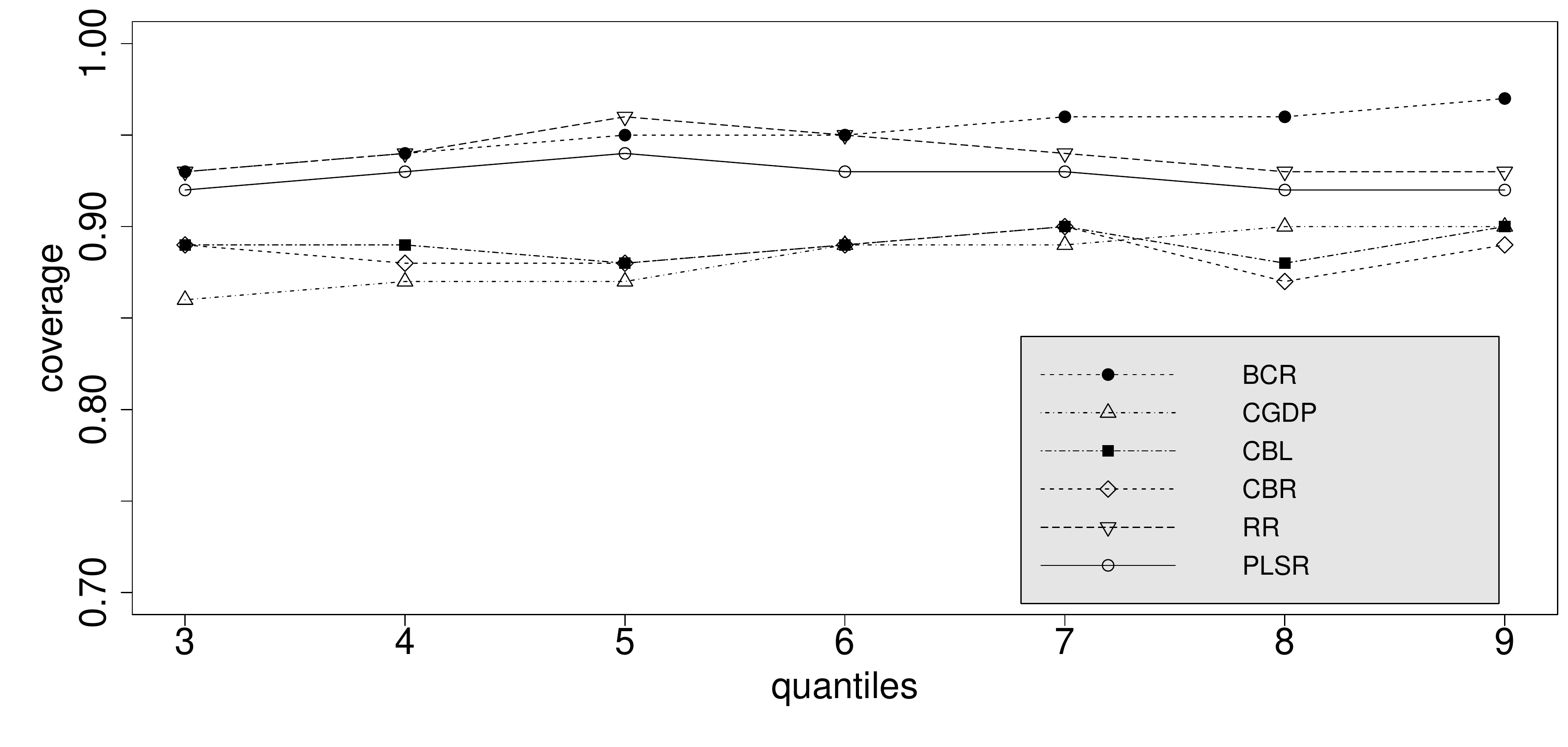}\label{resp1}}\\
   \subfigure[quantiles for $\bJ_{i,k}$]{\includegraphics[width=12cm]{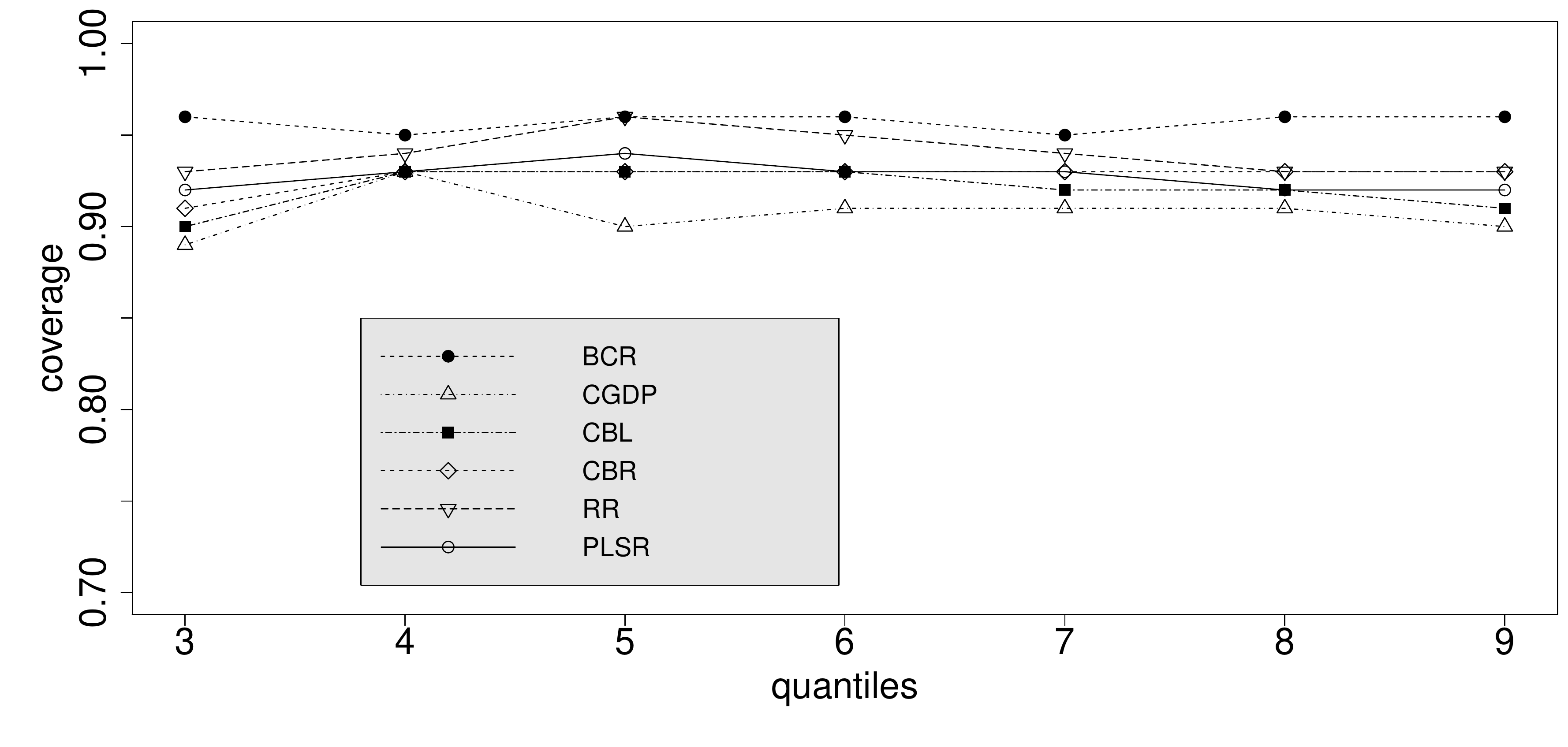}\label{resp2}}
  \end{center}
\caption{Coverage of 95\% PI's for the response variables}\label{coverquant}
\end{figure}

Figure~\ref{coverquant}
represents the coverage of 95\% PI's for BCR, CGDP, CBL and CBR, PLSR and RR for the two types of responses. It is evident from the plot that while
BCR shows excellent coverage, other compressed approaches have a slight under coverage.
This can be attributed to the fact that the lengths of the 95\% PI's for CGDP, CBL, CBR are much narrower than the lengths for BCR as presented in Table~\ref{tab5} and Table~\ref{tab6}. The frequentist methods also suffer from some under-coverage for the second set of responses, although they show competitive performance with BCR for the first set of responses.
\begin{table}[!th]
{\scriptsize
\begin{center}
\caption{Squared correlation between the predicted and observed responses for $\bO_{i,k}$.The figures in the
bracket represent length of the 95\% PI's for compressed models.}\label{tab5}
\begin{tabular}
[c]{cccccccccc}%
\hline
responses & resp3 & resp4 & resp5 & resp6 & resp7 & resp8 & resp9\\
\hline
           & & & & & & &\\
BCR        & 0.55 (15.45) & 0.54 (16.11)& 0.54 (16.52)& 0.50 (16.89)& 0.48 (17.49)& 0.45 (18.02)& 0.43 (18.34)\\
           & & & & & & &\\
CGDP       & 0.65 (7.51) & 0.63 (7.84)& 0.61 (8.14)& 0.59 (8.34)& 0.57 (8.63)& 0.55 (8.90)& 0.52 (9.05)\\
           & & & & & & &\\
CBL        & 0.70 (7.39)& 0.68 (7.71)& 0.67 (7.92)& 0.65 (8.12)& 0.63 (8.40)& 0.61 (8.64)& 0.59 (8.79)\\
           & & & & & & &\\
CBR        & 0.73 (7.28) & 0.71 (7.62)& 0.70 (7.81)& 0.69 (8.01)& 0.67 (8.29)& 0.65 (8.53)& 0.63 (8.69)\\
           & & & & & & &\\
LASSO       &  (--) &  (--) &  (--)&   (--) & (--)  & (--) & (--)\\
           & & & & & & &\\
RR         & 0.48 (11.26)& 0.44 (11.87)& 0.42 (12.19)& 0.40 (12.46)& 0.37 (12.99)& 0.33 (13.47)& 0.31 (13.71)\\
           & & & & & & &\\
PLSR       & 0.54 (11.03)& 0.50 (11.60)& 0.49 (11.73)& 0.48 (11.83)& 0.44 (12.46)& 0.40 (12.92)& 0.38 (13.05)\\
\hline
\end{tabular}
\end{center}
}
\end{table}

\begin{table}[!th]
{\scriptsize
\begin{center}
\caption{Squared correlation between the predicted and observed responses for $\bJ_{i,k}$. The figures in the
bracket represent length of the 95\% PI's for compressed models.}\label{tab6}
\begin{tabular}
[c]{cccccccc}%
\hline
responses & resp3 & resp4 & resp5 & resp6 & resp7 & resp8 & resp9\\
\hline
           & & & & & & &\\
BCR        & 0.46 (11.53)& 0.43 (11.83)& 0.39 (11.95)& 0.34 (12.11)& 0.31 (12.25)& 0.27 (12.68)& 0.22 (12.99)\\
           & & & & & & &\\
CGDP       & 0.55 (5.76)& 0.50 (5.95)& 0.46 (6.09)& 0.45 (6.12)& 0.42 (6.17)& 0.38 (6.32)& 0.35 (6.43)\\
           & & & & & & &\\
CBL        & 0.62 (5.59)& 0.58 (5.75)& 0.56 (5.84)& 0.53 (5.89)& 0.51 (5.95)& 0.46 (6.11)& 0.42 (6.22)\\
           & & & & & & &\\
CBR        & 0.66 (5.48)& 0.62 (5.66)& 0.60 (5.73)& 0.57 (5.80)& 0.55 (5.88)& 0.51 (6.04)& 0.47 (6.16)\\
           & & & & & & &\\
LASSO      & (--) & (--)& (--) & (--) & (--) & (--) & (--) \\
           & & & & & & &\\
RR         & 0.38 (8.07)& 0.33 (8.41)& 0.29 (8.57)& 0.26 (8.63)& 0.21 (8.88)& 0.16 (9.24)& 0.13 (9.45)\\
           & & & & & & &\\
PLSR       & 0.51 (7.03)& 0.46 (7.30)& 0.42 (7.45)& 0.38 (7.55)& 0.34 (7.73)& 0.28 (8.03)& 0.24 (8.14)\\
\hline
\end{tabular}
\end{center}
}
\end{table}

\section{Discussion}

The overarching goal of the proposed approach is to define a practical approximation to Bayesian inference in massive dimensional regression settings through the use of random compression of the predictors prior to analysis.  Given the dramatic computational gains, we expect to pay some price in terms of predictive accuracy and were pleasantly surprised that this price seemed to be small in most cases.  In fact, the approach had better performance in terms of mean square prediction error overall than many standard competitors due in part to the ability to characterize sparse as well as dense cases in which predictors can be compressed to a lower-dimensional linear subspace with minimal loss of information about the response.

There are many natural directions in terms of future research.  Firstly, the random projection approach can be directly extended to cases involving matrix or tensor-valued predictors, which increasingly arise in modern applications.  For example, if the predictor $X_i$ for subject $i$ is matrix-valued, then one can pre- and post-multiply by random projection matrices to compress to a lower-dimensional matrix that will be easier to handle computationally.  Similarly, in large $n$ and $p$ settings one can compress the enormous $n \times p$ design matrix by pre- and post-multiplication to obtain a smaller data set, while also appropriately compressing the response.  This would combine compressed sensing and compressed regression.


Although our focus has been on Bayesian approaches in GLMs, compressed regression can similarly be used much more broadly.  For example, one can estimate frequentist regression models for many different randomly compressed predictor vectors, with the results averaged in performing predictions.  This approach is reminiscent of random forests, but is quite different in nature and could be referred to as compressed averaging (Caving).  Finally, although the dramatic computational gains are attributable to the use of random projections generated in advance of the analysis, there is the possibility of refining these projections based on the available data; ideally this could be done in a manner that only adds modestly to the computational burden.

\section*{References}

\begin{description}
\item Armagan, A., Dunson, D.B., and Lee, J. ``Generalized double pareto shrinkage." \emph{Statistica Sinica}, 23:119-143, 2012.

\item Berger, J., O. ``A robust generalized Bayes estimator and confidence region for a multivariate normal mean." \emph{Annals of Statistics},
8(4):716-761, 1980.

\item Bhattacharya, A., Pati, D., Pillai, N. and Dunson, D.B. ``Bayesian shrinkage." \emph{Arxiv Preprint arxiv}:1212.6088, 2012.

\item Bickel, P.J., Ritov, Y., and Tsybakov, A.B. ``Simultaneous analysis of Lasso and Dantzig selectors." \emph{Annals of Statistics},
37(4):1705-1732, 2009.

\item Carvalho, C.M., Polson, N.G., and Scott, J.G. ``Handling sparsity via the horseshoe." \emph{JMLR: W \& CP}, 5:73-80, 2009.

\item Carvalho, C.M., Polson, N.G., and Scott, J.G. ``The horseshoe estimator for sparse signals." \emph{Biometrika}, 97(2):465-480, 2010.

\item Candes, E.A., Romberg, J. and Tao, T. ``Stable signal recovery from incomplete and inaccurate measurements." \emph{Communications in Pure and Applied Mathematics}, 59(8):1207-1223, 2006.

\item Candes, E.A., and Tao, T. ``Decoding by linear programming." \emph{IEEE Trans. Info. Theory}, 51(12):4203-4215, 2005.

\item Candes, E.A., and Tao, T. ``The Dantzig selector: statistical estimator when p is much larger than n."
     \emph{Annals of Statistics}, 35(6): 2313-2351, 2007.

\item Cook, R.D. \textit{Regression graphics:ideas for studying regression through graphics}, New York:Wiley, 1998.

\item Donoho, D. ``Compressed sensing." \emph{IEEE Trans. Info. Theory}, 52(4):1289-1306, 2006.

\item DuMouchel, W. ``Data squashing: constructing summary data sets." \emph{www.cs.princeton.edu}, 1999.

\item Davenport, M., Durate, M., Wakin, M., Laska, J., Takhar, D., Kelly, K., and Baraniuk,
R. ``The smashed filter for compressive classification and target recognition."
\emph{Proc. of Computational Imaging V.}, 2007.

\item Dasgupta, S. ``Experiments with random projection." \emph{Arxiv Preprint arxiv}:1301.3849, 2013.

\item Dasgupta, S., Gupta, A. ``An elementary proof of the theorem of Johnson and Lindenstrauss." \emph{Random Structures and Algorithms},
22(1):60-65, 2003.

\item Davenport, M., Boufounos, P.T., Wakin, M., Baraniuk, R. ``Signal processing with compressive measurements." \emph{Selected Topics in Signal Processing, IEEE Journal of}, 4(2):445-460, 2010.

\item Dunson, D.B., Watson, M., Taylor, J.A. ``Bayesian latent variable models for median regression on multiple regression." \emph{Biometrics}, 59:296-304.

\item Figueiredo, M.A.T. ``Adaptive sparseness for supervised learning." \emph{IEEE Transactions on Pattern Analysis and Machine Intelligence}, 25(9):1050-1059, 2003.

\item Faes, C., Ormerod, J.T. and Wand, M.P. ``Variational Bayesian inference for parametric and nonparametric regression with missing data."
     \emph{Journal of the American Statistical Association}, 106(495):959-971, 2011.

\item Griffin, J.E., and Brown, P.J. ``Bayesian adaptive Lassos with non-convex penalization." \emph{Technical Report}, 2007.

\item Griffin, J.E., and Brown, P.J. ``Inference with Normal-Gamma prior distributions in regression problems." \emph{Bayesian Analysis}, 5(1):171-188, 2010.

\item Ghosal, S., Ghosh, J.K., and Van Der Vaart, A.W. ``Convergence rates of posterior distributions." \emph{Annals of Statistics}, 28(2):500-531, 2000.

\item Girolami, M., Rogers, S. ``Variational Bayesian multinomial probit regression with Gaussian process priors." \emph{Neural Computation}, 18(8):1790-1817, 2006.

\item Ghosal, S., and Van Der Vaart, A.W. ``Entropies and rates of convergence for Bayes and maximum likelihood estimation for mixture of normal densities." \emph{Annals of Statistics}, 29(5):1233-1263, 2001.

\item Ghosal, S., and Van Der Vaart, A.W. ``Convergence rates of posterior distributions for non iid observations." \emph{Annals of Statistics}, 35(1):192-223, 2007.

\item Hans, C. ``Bayesian Lasso regression." \emph{Biometrika}, 96(4):835-845, 2009.

\item Jiang, W. ``Bayesian variable selection for high dimensional
generalized linear models: convergence
rates of the fitted densities." \emph{Annals of Statistics}, 35(4):1487-1511, 2007.



\item Johnson, W.B., Lindenstrauss, J. ``Extensions of Lipschitz maps into a Hilbert space,"
\emph{Contemp. Math}, 26:189–206, 1984.

\item  Lee, H.K.H., Taddy, M., Gray, G.A. ``Selection of a representative sample." \emph{Journal of Classification}, 27:41-53, 2008.

\item  Madigan, D., Raghavan, N.,  Dumouchel, W. ``Likelihood-based data squashing: A modeling approach to instance construction."
\emph{Data Mining and Knowledge Discovery}, 6:173-190, 2002.

\item Owen, A. ``Data squashing by empirical likelihood." \emph{Data Mining and Knowledge Discovery}, 7:101-113, 2003.

\item Olive, B.P., Durand, R. ``Heterogeneity in radiation-induced DNA damage and repair in
tumour and normal cells measured using the 'comet' assay,''
\emph{Radiation Research}, 112:86-94.

\item Ormerod, J.T. and Wand, M.P. ``Gaussian variational approximate inference for generalized linear mixed models."
   \emph{Journal of Computational and Graphical Statistics}, 21(1):2-17, 2012.

\item Park, T., Casella, G. ``The Bayesian Lasso." \emph{Journal of the American Statistical Association}, 103(482):681-686, 2008.


\item Reich, B.J., Bondell, H., and Li, L. ``Sufficient dimension reduction via Bayesian mixture modeling." \emph{Biometrics}, 67(3):886-895, 2011.

\item Raftery, A.E., Madigan, D., and Hoeting, J.A. ``Bayesian model averaging for linear regression models." \emph{Journal of the American Statistical Association}, 92(437):179-191, 1997.

\item Rodriguez, A., Dunson, D.B., and Taylor, J. ``Bayesian hierarchically weighted finite mixture models for samples of distributions." \emph{Biostatistics}, 10(1):155-171, 2009.

\item Strawn, N., Armagan, A., Saab, R., Carin, L., and Dunson, D.B. ``Finite sample posterior concentration in high dimensional regression." \emph{Arxiv Preprint arxiv}:1207.4854, 2012.

\item Tibshirani, R. ``Regression selection and shrinkage via the Lasso." \emph{Journal of the Royal Statis. Soc. Ser. B}, 58(1):267-288, 1996.

\item Tipping, M.E. ``Sparse Bayesian learning and the relevance vector machine." \emph{Journal of Machine Learning Research}, 1:211-244, 2011.

\item Titsias, M.K., Lawrence, N.D. ``Bayesian Gaussian process latent variable model." \emph{13th International Conference on Artificial Intelligence and Statistics (AISTAT)}, 2010.

\item Tokdar, S.T., Zhu, Y.M., Ghosh, J.K. ``Bayesian density regression with logistic Gaussian process and subspace projection."
\emph{Bayesian Analysis}, 5(2):319-344, 2010.

\item Wang, H., Xia, Y. ``Sliced regression for dimension reduction." \emph{Journal of the American Statistical Association}, 103(482):811-821, 2008.

\item West, M. ``On scale mixtures of normal distributions." \emph{Biometrika}, 74(3):646-648, 1987.

\item Zanten, H.V., Knapik, B., Van der Vaart, A. ``Bayesian inverse problems with Gaussian priors." \emph{Annals of Statistics}, 39(5):2626-2657, 2011.

\item Zhou, S., Lafferty, J. and Wasserman, L. ``Compressed and privacy-sensitive sparse regression." \emph{IEEE Transactions on Information Theory}, 55(2):846-866, 2009.

\item Zhao, P., and Yu, B. ``On model selection consistency of Lasso." \emph{Journal of Machine Learning Research}, 7:2541-2567, 2006.

\end{description}
\section{Appendix}
Define,
 \begin{eqnarray*}
 d(f,f_0)^2 &=& \int\int (\sqrt{f}-\sqrt{f_0})^2 \nu_y(dy)\nu_x(dx)\\
 d_t(f,f_0) &=& \frac{1}{t}\left\{\int\int f_0\left(\frac{f_0}{f}\right)^t\nu_y(dy)\nu_x(dx)-1\right\}\\
 d_0(f,f_0) &=& \int\int f_0 log\left(\frac{f_0}{f}\right)\nu_y(dy)\nu_x(dx)
 \end{eqnarray*}
  Assume ${\cal P}_n$ is a sequence of sets of probability densities. Let $N(\epsilon_n,{\cal P}_n)$ is the minimum number of Hellinger balls of radius $\epsilon_n$ needed to cover ${\cal P}_n$.

  Define the following conditions:
 \begin{enumerate}
 \item $\log\:N(\epsilon_n,{\cal P}_n)\leq n\epsilon_n^2$ for all large n
 \item $\pi({\cal P}_n^{c})\leq e^{-2n\epsilon_n^2}$ for all large n
 \item $\pi[f: d_t(f,f_0)\leq \frac{\epsilon_n^2}{4}]\geq e^{-n\epsilon_n^2/4}$ for all large n
 \end{enumerate}
 It has been proved in Jiang (2005) that
 \begin{proposition}
 If $n\epsilon_n^2\rightarrow\infty$ then under 1, 2, 3 (for some $t>0$), we have
 \begin{equation*}
 E_{f_0}\pi\left[d(f,f_0)>4\epsilon_n\given (y_i,\bx_i)_{i=1}^{n}\right]\leq 4e^{-n\epsilon_n^2\:\mbox{min}\{1/2,t/4\}}
 \end{equation*}
 \end{proposition}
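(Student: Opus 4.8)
The plan is to run the classical testing-plus-evidence argument of Ghosal, Ghosh and van der Vaart, with the single twist that the prior-mass condition is phrased through the R\'enyi-type divergence $d_t$ rather than through Kullback--Leibler divergence and its variance. Writing $A_n=\{f:d(f,f_0)>4\epsilon_n\}$ for the bad set, I would express the posterior probability as the ratio
\begin{equation*}
\pi\!\left(A_n \mid (y_i,\bx_i)_{i=1}^n\right)=\frac{\int_{A_n}\prod_{i=1}^n \frac{f(y_i)}{f_0(y_i)}\,d\pi(f)}{\int \prod_{i=1}^n \frac{f(y_i)}{f_0(y_i)}\,d\pi(f)}=:\frac{N_n}{D_n},
\end{equation*}
then bound $E_{f_0}[\pi(A_n\mid\text{data})]$ by inserting a test $\phi_n\in[0,1]$ and splitting as $E_{f_0}\phi_n+E_{f_0}[(1-\phi_n)N_n/D_n]$. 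The three resulting pieces --- the Type I error $E_{f_0}\phi_n$, the numerator $N_n$ on the event where $\phi_n$ is small, and a lower bound on $D_n$ --- would be controlled separately and then recombined.

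For the tests I would use condition~1. Covering ${\cal P}_n$ by $N(\epsilon_n,{\cal P}_n)\le e^{n\epsilon_n^2}$ Hellinger balls of radius $\epsilon_n$, for each ball whose center lies in $A_n$ the standard Birg\'e--Le Cam bound for Hellinger-separated pairs furnishes a test of $f_0$ against that ball with both error probabilities at most $e^{-cn\epsilon_n^2}$ for a universal $c$. Taking $\phi_n$ to be the maximum of these tests and applying a union bound gives $E_{f_0}\phi_n\le N(\epsilon_n,{\cal P}_n)\,e^{-cn\epsilon_n^2}$ and $\sup_{f\in{\cal P}_n\cap A_n}E_f(1-\phi_n)\le e^{-cn\epsilon_n^2}$; since $\log N(\epsilon_n,{\cal P}_n)\le n\epsilon_n^2$, the first quantity remains exponentially small once $c>1$.

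The evidence lower bound is where $d_t$ enters, and I expect this to be the main obstacle. Restricting $D_n$ to the neighborhood $B_n=\{f:d_t(f,f_0)\le\epsilon_n^2/4\}$ of condition~3 and applying Jensen's inequality to the convex map $x\mapsto x^{-t}$ followed by Fubini, I would obtain
\begin{equation*}
E_{f_0}\!\left[\Big(\tfrac{1}{\pi(B_n)}\!\int_{B_n}\!\prod_{i=1}^n\tfrac{f(y_i)}{f_0(y_i)}\,d\pi\Big)^{-t}\right]\le \frac{1}{\pi(B_n)}\int_{B_n}\prod_{i=1}^n E_{f_0}\!\Big[\big(\tfrac{f_0(y_i)}{f(y_i)}\big)^{t}\Big]\,d\pi(f).
\end{equation*}
Writing the per-observation moments as $E_{f_0}[(f_0(y_i)/f(y_i))^t]=1+t\delta_i$, the definition of $d_t$ under the empirical measure on $\bx$ gives $\frac{1}{n}\sum_{i=1}^n\delta_i=d_t(f,f_0)$, so the product of factors is at most $\exp(t\sum_i\delta_i)=\exp(nt\,d_t(f,f_0))\le\exp(nt\epsilon_n^2/4)$ on $B_n$. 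A Markov inequality applied to this $(-t)$-th moment then yields $D_n\ge\int_{B_n}\prod(f/f_0)\,d\pi\ge\pi(B_n)e^{-an\epsilon_n^2}\ge e^{-(a+1/4)n\epsilon_n^2}$ off an event of $f_0$-probability at most $e^{-t(a-1/4)n\epsilon_n^2}$, using $\pi(B_n)\ge e^{-n\epsilon_n^2/4}$. Choosing $a=1/2$ makes this failure probability $e^{-tn\epsilon_n^2/4}$, which is precisely the source of the $t/4$ in the final exponent. The delicate point is that this one-sided moment control is exactly what replaces the usual second-moment (KL-variance) condition, so I would need to check carefully that $\prod(1+t\delta_i)\le e^{ntd_t}$ and that the constants $a$ and $t$ line up.

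Finally I would assemble the pieces. On the event where $D_n\ge e^{-3n\epsilon_n^2/4}$, the numerator contribution from ${\cal P}_n\cap A_n$ is bounded in expectation by the Type II error $\int_{{\cal P}_n\cap A_n}E_f(1-\phi_n)\,d\pi\le e^{-cn\epsilon_n^2}$, while the contribution from ${\cal P}_n^{c}$ is bounded by $E_{f_0}\int_{{\cal P}_n^{c}}\prod(f/f_0)\,d\pi=\pi({\cal P}_n^{c})\le e^{-2n\epsilon_n^2}$ by Fubini and condition~2. Dividing by the denominator bound and adding back the Type I error and the denominator-failure probability, every term is at most a constant multiple of $e^{-n\epsilon_n^2\min\{1/2,\,t/4\}}$ after optimizing the free constants, which gives the stated inequality with leading factor $4$.
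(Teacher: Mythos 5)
The paper never proves this proposition itself --- it is quoted from Jiang (2005; listed as Jiang, 2007 in the references), and the appendix only verifies conditions 1--3 for the BCR model --- so the relevant comparison is with Jiang's original argument, which your proposal reconstructs faithfully: Ghosal--Ghosh--van der Vaart-style tests built from the entropy condition, Fubini for the prior mass on ${\cal P}_n^{c}$, and the $(-t)$-th-moment/Markov lower bound on the evidence restricted to $\{f: d_t(f,f_0)\leq \epsilon_n^2/4\}$, which is exactly where the $\min\{1/2,t/4\}$ exponent and the leading factor $4$ (one per term in the decomposition) originate. Your sketch is correct as written; the one step worth stating more carefully is the testing lemma, since Hellinger balls are not convex, so you need the Birg\'e/Le Cam form that controls the type II error uniformly over a Hellinger ball around the alternative (as in Lemma 2 of Ghosal, Ghosh and van der Vaart, 2000) rather than the separated-convex-sets version, which is what your union-bound construction implicitly invokes.
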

 Our proof will be complete if we can show conditions 1, 2, 3 hold in our case with some $t$. We prove them for $t=1$.

 First we will state and prove two propositions which will be used subsequently to prove the main results.

 \begin{proposition}\label{proppr}
 Assume $\bbeta$ is assigned $\mbox{N}(\bzero,\bSigma_{\bbeta})$ apriori. Then
 \begin{equation*}
 P(|(\bPhi\bx)'\bbeta-\bx'\bbeta_0|<\Delta)> P(X-Y\geq 2),
 \end{equation*}
  where $X\sim Pois(\frac{\Delta_1}{2})$, $Y\sim Pois(\frac{\lambda}{2})$ with $\Delta_1=\frac{\Delta^2}{(\bPhi\bx)'\bSigma_{\bbeta}(\bPhi\bx)}$, $\lambda=\frac{(\bx'\bbeta_0)^2}{(\bPhi\bx)'\bSigma_{\bbeta}(\bPhi\bx)}$
 \end{proposition}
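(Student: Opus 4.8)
The plan is to recognize the target probability as the distribution function of a noncentral chi-squared variable and then exploit the Poisson-mixture representations of both the noncentral and the even-degree-of-freedom central chi-squared laws.

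First I would note that, conditionally on $\bPhi$, the scalar $(\bPhi\bx)'\bbeta$ is a linear functional of the Gaussian vector $\bbeta\sim N(\bzero,\bSigma_{\bbeta})$, hence $(\bPhi\bx)'\bbeta\sim N(0,\tau^2)$ with $\tau^2=(\bPhi\bx)'\bSigma_{\bbeta}(\bPhi\bx)$. Writing $c=\bx'\bbeta_0$, the event $\{|(\bPhi\bx)'\bbeta-c|<\Delta\}$ is exactly $\{((\bPhi\bx)'\bbeta-c)^2/\tau^2<\Delta^2/\tau^2\}$. Since $((\bPhi\bx)'\bbeta-c)/\tau\sim N(-c/\tau,1)$, its square is a noncentral $\chi^2_1$ with noncentrality $\lambda=c^2/\tau^2$, while $\Delta^2/\tau^2=\Delta_1$ by definition. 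Thus the left-hand probability equals $P(\chi^2_1(\lambda)<\Delta_1)$ exactly.

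Next I would apply the standard Poisson-mixture representation $\chi^2_1(\lambda)\stackrel{d}{=}\chi^2_{1+2Y}$ with $Y\sim \mathrm{Pois}(\lambda/2)$, so that conditioning on $Y$ gives $P(\chi^2_1(\lambda)<\Delta_1)=\sum_{j\ge0}P(Y=j)\,F_{1+2j}(\Delta_1)$, where $F_k$ denotes the $\chi^2_k$ distribution function. To obtain a lower bound I would use the stochastic monotonicity of $\chi^2_k$ in $k$, namely $F_{1+2j}(\Delta_1)\ge F_{2+2j}(\Delta_1)=F_{2(j+1)}(\Delta_1)$. The even-degree-of-freedom identity (via the Gamma--Poisson duality, $\chi^2_{2k}/2\sim\mathrm{Gamma}(k,1)$) then yields $F_{2(j+1)}(\Delta_1)=P(X\ge j+1)$ for $X\sim\mathrm{Pois}(\Delta_1/2)$ taken independent of $Y$. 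Summing over $j$ identifies the bound as $P(X\ge Y+1)=P(X-Y\ge1)$, which strictly exceeds $P(X-Y\ge2)$ since $P(X-Y=1)>0$; this is the claimed inequality.

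The routine pieces are the Gaussian linear-functional computation and the two chi-squared/Poisson dualities. The one step requiring care is the direction of the degree-of-freedom shift: to lower bound an odd-df central chi-squared mass I must round the degrees of freedom \emph{up} to the next even value $2(j+1)$, not down to $2j$, since only that direction preserves the inequality while producing the $+1$ offset in the index, which is precisely what makes the bound land on $P(X-Y\ge1)$ and hence strictly above $P(X-Y\ge2)$. I would also record that $X$ and $Y$ may be coupled independently, as the mixing variable governing the noncentrality and the variable arising from the central chi-squared tail come from distinct sources.
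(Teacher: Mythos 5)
Your proof is correct and follows essentially the same route as the paper: identify the event as $\{\chi^2_1(\lambda)<\Delta_1\}$, expand the noncentral chi-square as a Poisson mixture of central chi-squares, round the odd degrees of freedom up to the next even value, and convert via the Gamma--Poisson (Poisson-process) duality. Your bookkeeping is in fact slightly sharper: the paper's interarrival-time step has a harmless off-by-one (it writes $\{N(\Delta_1)>i+1\}$ where $\{N(\Delta_1)\ge i+1\}$ is the correct event), which lands it directly on $P(X-Y\ge 2)$, whereas you correctly obtain the stronger intermediate bound $P(X-Y\ge 1)$ and then weaken to the stated claim by noting $P(X-Y=1)>0$.
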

 \begin{proof}
 Note that $(\bPhi\bx)'\bbeta\sim N(0,(\bPhi\bx)'\bSigma_{\bbeta}(\bPhi\bx))$. This implies that $\frac{|(\bPhi\bx)'\bbeta-\bx'\bbeta_0|^2}{(\bPhi\bx)'\bSigma_{\bbeta}(\bPhi\bx)}\sim \chi_1^2(\lambda)$.
 Invoking the popular representation of noncentral $\chi^2$ density with mixture of gamma densities, one gets
 \begin{align}\label{eq:priorconc}
& P(|(\bPhi\bx)'\bbeta-\bx'\bbeta_0|<\Delta)= P\left(\frac{|(\bPhi\bx)'\bbeta-\bx'\bbeta_0|^2}{(\bPhi\bx)'\bSigma_{\bbeta}(\bPhi\bx)}<\Delta_1\right) =P(\chi_1^2(\lambda)\leq\Delta_1)\nonumber\\
&=\sum_{i=0}^{\infty}\frac{e^{-\frac{\lambda}{2}}(\frac{\lambda}{2})^i}{i!}P(Z_{1+2i}<\Delta_1)
 \end{align}
 where $Z_{1+2i}\sim\chi_{1+2i}^2$. Note that $P(Z_{1+2i}<\Delta_1)>P(Z_{2+2i}<\Delta_1)=P(G<\Delta_1)$, where
 $G\sim Gamma(\frac{1}{2},1+i)$. Assume, $H_{i1},...,H_{i(1+i)}\stackrel{iid}{\sim}exp(\frac{1}{2})$ for $i\in\mathbb{N}$.

 Consider a Poisson process with interarrival times $H_{i1},...,H{i(i+1)}$ respectively. If $N(\Delta_1)$ be the number of events upto time $\Delta_1$ then
 $N(\Delta_1)\stackrel{d}{\equiv}X$ (using results from the theory of Poisson process) and
 \begin{align*}
 \left\{H_{i1}+\cdots+H_{i(i+1)}<\Delta_1\right\}=\left\{N(\Delta_1)>i+1\right\}
 \end{align*}
 Using the above results, (\ref{eq:priorconc}) can be written as
 \begin{align*}
 &\sum_{i=0}^{\infty}\frac{e^{-\frac{\lambda}{2}}(\frac{\lambda}{2})^i}{i!}P(Z_{1+2i}<\Delta_1)
 >\sum_{i=0}^{\infty}\frac{e^{-\frac{\lambda}{2}}(\frac{\lambda}{2})^i}{i!}P(Z_{2+2i}<\Delta_1)\\
 &=\sum_{i=0}^{\infty}\frac{e^{-\frac{\lambda}{2}}(\frac{\lambda}{2})^i}{i!}P\left(\sum\limits_{k=1}^{1+i}H_{ik}<\Delta_1\right)
 =\sum_{i=0}^{\infty}\frac{e^{-\frac{\lambda}{2}}(\frac{\lambda}{2})^i}{i!}P(N(\Delta_1)>1+i)\\
 &=\sum_{i=0}^{\infty}\frac{e^{-\frac{\lambda}{2}}(\frac{\lambda}{2})^i}{i!}P(X>1+i)
 =\sum_{i=0}^{\infty}\frac{e^{-\frac{\lambda}{2}}(\frac{\lambda}{2})^i}{i!}P(X\geq 2+i)\\
 &=E_{Y\sim Pois(\frac{\lambda}{2})}\left[P(X\geq 2+i)\right]=P(X\geq 2+Y)=P(X-Y\geq 2)
 \end{align*}
 \end{proof}
 We seek to obtain proposition of similar spirit when $\beta_j\sim DE(1)$ i.i.d prior. Using the fact that the DE(1) distribution is the scale mixture of normal
 distribution, we have
 \begin{align*}
 \beta_j\given\tau_j^2 &\stackrel{ind}{\sim} N(0,\tau_j^2),\:\:j=1,...,m_n\\
 \pi(\tau_j^2) &\propto \frac{1}{2}\exp\left\{-\frac{\tau_j^2}{2}\right\},\:\:j=1,...,m_n
 \end{align*}
 Stacking them up, we obtain $\bbeta=(\beta_1,...,\beta_{m_n})'\sim N(\bzero,\bD_{\btau})$, where $\bD_{\btau}=\mbox{diag}(\tau_1^2,...,\tau_{m_n}^2)$.
 The next proposition comes in the same spirit as Proposition \ref{proppr}. The proof of this proposition follows along the same line as of Proposition \ref{proppr} and is, therefore, omitted.
 \begin{proposition}\label{proppr2}
 $P(|(\bPhi\bx)'\bbeta-\bx'\bbeta_0|<\Delta\given \tau_1^2,...,\tau_{m_n}^2)> P(X_2-Y_2\geq 2\given \tau_1^2,...,\tau_{m_n}^2)$, where $X_2\sim Pois(\frac{\Delta_2}{2})$, $Y_2\sim Pois(\frac{\lambda_2}{2})$ with $\Delta_2=\frac{\Delta^2}{(\bPhi\bx)'\bD_{\btau}(\bPhi\bx)}$, $\lambda_2=\frac{(\bx'\bbeta_0)^2}{(\bPhi\bx)'\bD_{\btau}(\bPhi\bx)}$
 \end{proposition}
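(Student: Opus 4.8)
The plan is to recognize that Proposition \ref{proppr2} is simply the conditional analogue of Proposition \ref{proppr}. Once we fix $\tau_1^2,\ldots,\tau_{m_n}^2$, the scale-mixture representation gives $\bbeta\given\tau_1^2,\ldots,\tau_{m_n}^2\sim N(\bzero,\bD_{\btau})$ with $\bD_{\btau}=\mbox{diag}(\tau_1^2,\ldots,\tau_{m_n}^2)$ positive definite almost surely. Consequently the entire chain of reasoning behind Proposition \ref{proppr} carries over verbatim after replacing $\bSigma_{\bbeta}$ by $\bD_{\btau}$ everywhere, with all probabilities now read conditionally on the $\tau_j^2$'s.

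Concretely, I would first note that, conditionally, $(\bPhi\bx)'\bbeta\sim N(0,(\bPhi\bx)'\bD_{\btau}(\bPhi\bx))$, so that
\begin{equation*}
\frac{|(\bPhi\bx)'\bbeta-\bx'\bbeta_0|^2}{(\bPhi\bx)'\bD_{\btau}(\bPhi\bx)}\sim\chi_1^2(\lambda_2),
\end{equation*}
a noncentral chi-squared with one degree of freedom and noncentrality $\lambda_2=(\bx'\bbeta_0)^2/\{(\bPhi\bx)'\bD_{\btau}(\bPhi\bx)\}$. Rescaling the event $\{|(\bPhi\bx)'\bbeta-\bx'\bbeta_0|<\Delta\}$ by the conditional variance turns it into $\{\chi_1^2(\lambda_2)\leq\Delta_2\}$ with $\Delta_2=\Delta^2/\{(\bPhi\bx)'\bD_{\btau}(\bPhi\bx)\}$. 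Next I would expand this via the Poisson-mixture-of-gammas representation exactly as in (\ref{eq:priorconc}), apply the inequality $P(Z_{1+2i}<\Delta_2)>P(Z_{2+2i}<\Delta_2)$ (a central chi-squared being stochastically increasing in its degrees of freedom), and read $Z_{2+2i}$ as a sum of $1+i$ i.i.d.\ $\exp(1/2)$ interarrival times of a rate-$1/2$ Poisson process. This gives $\{Z_{2+2i}<\Delta_2\}=\{N(\Delta_2)>1+i\}$ with $N(\Delta_2)\stackrel{d}{\equiv}X_2\sim\mbox{Pois}(\Delta_2/2)$, hence $P(Z_{2+2i}<\Delta_2)=P(X_2\geq 2+i)$. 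Averaging $P(X_2\geq 2+i)$ against the $\mbox{Pois}(\lambda_2/2)$ weights then identifies the sum as $E_{Y_2}[P(X_2\geq 2+Y_2)]=P(X_2-Y_2\geq 2)$, all conditional on the $\tau_j^2$'s, which is the desired inequality.

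Since every step duplicates the corresponding step of Proposition \ref{proppr}, there is no genuine obstacle here; the only point requiring care is bookkeeping, namely keeping all distributions and probabilities conditional on $\tau_1^2,\ldots,\tau_{m_n}^2$ throughout, together with the trivial observation that $\bD_{\btau}$ is positive definite almost surely so that $(\bPhi\bx)'\bD_{\btau}(\bPhi\bx)>0$ and the noncentral chi-squared representation is valid. This is precisely why the authors can assert that the proof follows the same line and omit it.
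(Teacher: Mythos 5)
Your proposal is correct and is exactly what the paper intends: the paper omits the proof of Proposition \ref{proppr2}, stating only that it ``follows along the same line as of Proposition \ref{proppr},'' and your argument reproduces that line verbatim—conditioning on $\tau_1^2,\ldots,\tau_{m_n}^2$, replacing $\bSigma_{\bbeta}$ by $\bD_{\btau}$, and rerunning the noncentral $\chi^2$/Poisson-mixture/Skellam-type reduction. No discrepancy with the paper's approach.
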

\textbf{Proof of the Theorem \ref{theorem1}:}

\begin{proof}
We will check the three conditions with $t=1$. Let $b_n=\sqrt{8\tilde{B}_n n\epsilon_n^2}$

\textbf{condition 1:}  Assume ${\cal P}_n$ be the set of all densities that can be represented by $\bbeta$ s.t. $|\beta_j|\leq b_n,\:\forall\:j$. Lets consider $l^{\infty}$ balls of the form $(a_j-\delta,a_j+\delta)$ for each coordinate with the center of each ball inside ${\cal P}_n$.
 It is not difficult to see that one needs at most $(\frac{2b_n}{2\delta}+1)^{m_n}$ such  balls to cover ${\cal P}_n$.

 Let $f_u$ be any density in ${\cal P}_n$. Therefore, $\exists\:\bgamma$ s.t. $u=(\bPhi\bx)'\bgamma$, $|\gamma_j|\leq b_n$ and
 \begin{equation*}
 f_u(y)=\exp\left\{y a(u)+b(u)+c(y)\right\}.
 \end{equation*}
 Let $\gamma_j\in (\alpha_j-\delta,\alpha_j+\delta)\:\forall\:j$, s.t. $|\gamma_j-\alpha_j|\leq\delta$ and $|\alpha_j|\leq b_n$. Let, $v=(\bPhi\bx)'\balpha$ and
 \begin{equation*}
 f_v(y)=\exp\left\{y a(v)+b(v)+c(y)\right\}.
 \end{equation*}
 We will use the well known fact that $d(f_u,f_v)\leq\left\{\frac{1}{2} d_0(f_u,f_v)\right\}^{1/2}$ and bound on $d_0(f_u,f_v)$. Note that,
 \begin{align}\label{eq:glmfit}
 d_0(f_u,f_v)&=\int\int f_v \:log\left(\frac{f_v}{f_u}\right)\nu_y(dy)\nu_x(dx)\nonumber\\
  &=\int\left\{\int\left[y(a(v)-a(u))+(b(v)-b(u))\right]f_v \nu_y(dy)\right\}\nu_x(dx)\nonumber\\
  &=\int\left\{(a(v)-a(u))\left(-\frac{b'(v)}{a'(v)}\right)+(b(v)-b(u))\right\}\nu_x(dx)\nonumber\\
  &=\int (v-u)\left\{(a'(u_v)\left(-\frac{b'(v)}{a'(v)}\right)+b'(u_v)\right\}\nu_x(dx)
 \end{align}
 where the last step follows from mean value theorem, $u_v$ is an intermediate point between $u$ and $v$. Applying Cauchy-Schwartz inequality we have,
 \begin{equation*}
 |v-u|=|(\bPhi\bx)'(\bgamma-\balpha)|\leq ||\bPhi\bx||\:||\bgamma-\balpha||\leq \sqrt{m_n}||\bx||\delta
 \leq \sqrt{p_n m_n}\delta=\theta_n\delta
 \end{equation*}
 With a similar argument one can show that $|v|\leq b_n\theta_n$ and $|u|\leq b_n\theta_n$. Therefore, $|u_v|\leq b_n\theta_n$.
 Using these results and (\ref{eq:glmfit})
 \begin{equation*}
 \frac{1}{2}d_0(f_u,f_v)\leq \sup\limits_{|h|\leq b_n\theta_n}|a'(h)|\sup\limits_{|h|\leq b_n\theta_n}\left|\frac{b'(h)}{a'(h)}\right|\theta_n \delta,\:
 \end{equation*}
 Using the well known fact that $d(f_u,f_v)\leq\left\{\frac{1}{2} d_0(f_u,f_v)\right\}^{1/2}$ we obtain
 \begin{equation*}
 d(f_u,f_v)\leq \sqrt{\sup\limits_{|h|\leq b_n\theta_n}|a'(h)|\sup\limits_{|h|\leq b_n\theta_n}\left|\frac{b'(h)}{a'(h)}\right|\theta_n \delta}.
 \end{equation*}
 Choosing $\delta=\frac{\epsilon_n^2}{\sup\limits_{|h|\leq b_n\theta_n}|a'(h)|\sup\limits_{|h|\leq b_n\theta_n}\left|\frac{b'(h)}{a'(h)}\right|\theta_n}$, one gets $d(f_u,f_v)\leq \epsilon_n$. Therefore, the entropy
 of ${\cal P}_n$ is bounded above by
 \begin{align*}
 \left(1+\frac{b_n\theta_n}{\epsilon_n^2}\sup\limits_{|h|\leq b_n\theta_n}|a'(h)|\sup\limits_{|h|\leq b_n\theta_n}\left|\frac{b'(h)}{a'(h)}\right|\right)^{m_n}=\left(1-\frac{1}{\epsilon_n^2}+\frac{D(b_n\theta_n)}{\epsilon_n^2}\right)^{m_n}
 \leq\:\left(\frac{D(b_n\theta_n)}{\epsilon_n^2}\right)^{m_n}
 \end{align*}
 Therefore, $log\:N(\epsilon_n,{\cal P}_n)\leq m_n\:log\left(\frac{D(b_n\theta_n)}{\epsilon_n^2}\right)$.  Using the assumptions given in the proposition, condition 1 follows.

 \textbf{condition 2:}
 Note that
 \begin{equation*}
 \pi({\cal P}_n^{c})=\pi(\cup_{j=1}^{m_n}[|\beta_j|>b_n])\leq \sum\limits_{j=1}^{m_n}\pi(|\beta_j|>b_n).
 \end{equation*}
 By Mills ratio this quantity is bounded above by $2m_n\:\frac{\exp\left\{-b_n^2/2\tilde{B}_n^2\right\}}{\sqrt{2\pi b_n^2/\tilde{B}_n}}=2m_n\frac{\exp\{-4n\epsilon_n^2\}}{\sqrt{2\pi n\epsilon_n^2}}$. Using conditions in (i), for all large n,
 the expression can further be bounded above by $\exp\left\{-2n\epsilon_n^2\right\}$. This yields condition 2.

 \textbf{condition 3:}
  Using proposition \ref{proppr} we obtain, $P(|(\bPhi\bx)'\bbeta-\bx'\bbeta_0|<\Delta)> P(X-Y\geq 2)$. Note that, when $X\sim Pois(\frac{\Delta_1}{2})$ and
 $Y\sim Pois(\frac{\lambda_1}{2})$, $X-Y$ follows Skellam distribution with
 \begin{equation}\label{eq:skel}
 P(X-Y=k)=\exp\{-(\lambda_1+\Delta_1)\}\left(\frac{\Delta_1}{\lambda_1}\right)I_{|k|}\left(2\sqrt{\lambda_1\Delta_1}\right).
 \end{equation}
 Plugging in $\lambda_1$ and $\Delta_1$ in (\ref{eq:skel}), we have
 \begin{align*}
 P(X-Y=k)=\exp\left\{-\frac{((\bx'\bbeta_0)^2+\Delta^2)}{(\bPhi\bx)'\bSigma_{\bbeta}(\bPhi\bx)}\right\}\left(\frac{\Delta^2}{(\bx'\bbeta_0)^2}\right)^{k/2}I_{|k|}\left(2\frac{\Delta
 |\bx'\bbeta_0|}{(\bPhi\bx)'\bSigma_{\bbeta}(\bPhi\bx)}\right)
 \end{align*}
 Now we use the fact that for $z>0$, $I_{\nu}(z)>2^{\nu}z^{\nu}\bGamma(\nu+1)$ (Joshi et al, 1991) to get
 \begin{align*}
 P(X-Y\geq 2) &>P(X-Y=2)>\exp\left\{-\frac{((\bx'\bbeta_0)^2+\Delta^2)}{(\bPhi\bx)'\bSigma_{\bbeta}(\bPhi\bx)}\right\}\frac{2^4 \Delta^4}{[(\bPhi\bx)'\bSigma_{\bbeta}(\bPhi\bx)]^2}\\
 &>\exp\left\{-\frac{((\bx'\bbeta_0)^2+\Delta^2)}{\underline{B}_n||\bPhi\bx||^2}\right\}\frac{2^4 \Delta^4}{\tilde{B}_n^2||\bPhi\bx||^4}\\
 &>\exp\left\{-\frac{((\bx'\bbeta_0)^2+\Delta^2)\log(m_n)}{B_1||\bPhi\bx||^2}\right\}\frac{2^4 \Delta^4}{m_n^{2v}B^2||\bPhi\bx||^4}
 \end{align*}
 Take $\Delta=\frac{\epsilon_n^2}{4\eta}$, $\eta$ will be chosen later.
 Note that,
 $2v\log{m_n}+2\log(B)-4\log(2)-4\log(\Delta)>0,\:\:$ for all large $n$ and
 \begin{equation*}
 8\frac{2v\log{m_n}+2\log(B)-4\log(2)-4\log(\Delta)}{n\epsilon_n^2}\rightarrow 0,
 \end{equation*}
  from the assumptions. Therefore, $\frac{2^4 \Delta^4}{m_n^{2v}B^2||\bPhi\bx||^4}>\exp\left\{-n\epsilon_n^2/8\right\}$ for all $\bx=\bx_1,...,\bx_n$.

 Also, note that $(\bx'\bbeta_0)^2<\sum_n |\beta_{j0}|<K$. Therefore,
 \begin{align*}
& \exp\left\{-\frac{((\bx'\bbeta_0)^2+\Delta^2)\log(m_n)}{B_1||\bPhi\bx||^2}\right\}>\exp\left\{-\frac{(K^2+\Delta^2)\log(m_n)}{B_1||\bPhi\bx||^2}\right\}\\
&>\exp\left\{-\frac{n\epsilon_n^2}{8}\frac{(K^2+1)\log(m_n)}{B_1||\bPhi\bx||^2}\right\}>\exp\left\{-\frac{n\epsilon_n^2}{8}\right\}
 \end{align*}
 Therefore, $P(X-Y\geq 2)>\exp\{-\frac{n\epsilon_n^2}{4}\}$. Proposition \ref{proppr} then yields
 \begin{equation*}
 P\left(|(\bPhi\bx)'\bbeta-\bx'\bbeta_0|<\frac{\epsilon_n^2}{4\eta}\right)>\exp\left\{-\frac{n\epsilon_n^2}{4}\right\}\:\:\mbox{for all large}\:n
 \end{equation*}
 For $\bx=\bx_1,...,\bx_n$,
 let $\mathcal{S}=\left\{\bbeta:|(\bPhi\bx)'\bbeta-\bx'\bbeta_0|<\frac{\epsilon_n^2}{4\eta}\right\}$. Now, take $t=1$. Therefore we can write $d_t(f,f_0)=E_x[g(u_i)((\bPhi\bx)'\bbeta-\bx'\bbeta_0)]$, where $g$ has continuous derivative in the neighborhood of $\bx'\bbeta_0$ and $u_i$ is an intermediate point between $(\bPhi\bx)'\bbeta$ and $\bx'\bbeta_0$, by integrating $y$ and applying a first order Taylor expansion. Choose $\eta$ s.t. $|g|<\eta$ in the neighborhood $[-(K+1),(K+1)]$ for all large $n$. Then
 \begin{equation*}
 |u_i|<|(\bPhi\bx)'\bbeta-\bx'\bbeta_0|+|\bx'\bbeta_0|<\frac{\epsilon_n^2}{4\eta}+K
 \end{equation*}
 implies that $d_t(f,f_0)<\frac{\epsilon_n^2}{4}$ is a subset of $\mathcal{S}$ . Hence, condition 3 is satisfied. 
 \end{proof}

 \textbf{Proof of the Theorem \ref{theorem2}:}

 \begin{proof}

 Here also we proceed by checking conditions 1, 2, 3 for $t=1$.

 \textbf{condition 1:} This proof follows exactly along the same line as in the proof of the condition 1. in Theorem \ref{theorem1}.

 \textbf{condition 2:} Note that,
 \begin{equation*}
 \pi({\cal P}_n^{c})=\pi(\cup_{j=1}^{m_n}[|\beta_j|>b_n])\leq \sum\limits_{j=1}^{m_n}\pi(|\beta_j|>b_n)=m_n\exp\{-b_n\}.
 \end{equation*}
Under the assumptions, this expression can be bounded by $\exp\left\{-2n\epsilon_n^2\right\}$ for all large n.

\textbf{condition 3:} Following the same line as in the last Proposition, we obtain
 \begin{align*}
 P(|(\bPhi\bx)'\bbeta-\bx'\bbeta_0|<\Delta\given \tau_1^2,...,\tau_{m_n}^2) &> P(X_2-Y_2\geq 2\given \tau_1^2,...,\tau_{m_n}^2)\\
 &>\exp\left\{-\frac{(\bx'\bbeta_0)^2+\Delta^2}{\tau_{min}^2||\bPhi\bx||^2}\right\}\frac{2^4\Delta^4}{||\bPhi\bx||^4\tau_{max}^4}
 \end{align*}
 Integrating w.r.t $(\tau_{min}^2,\tau_{max}^2)$
 \begin{align*}
  P(|(\bPhi\bx)'\bbeta-\bx'\bbeta_0|<\Delta)&=\int_{0}^{\infty}\int_{\tau_{min}^2}^{\infty}
  \exp\left\{-\frac{(\bx'\bbeta_0)^2+\Delta^2}{\tau_{min}^2||\bPhi\bx||^2}\right\}\frac{2^4\Delta^4}{||\bPhi\bx||^4\tau_{max}^4}f(\tau_{min}^2,\tau_{max}^2)
  d\tau_{max}^2 d\tau_{min}^2\\
 &>\int_{1}^{\infty}\int_{\tau_{min}^2}^{\infty}
  \exp\left\{-\frac{(\bx'\bbeta_0)^2+\Delta^2}{\tau_{min}^2||\bPhi\bx||^2}\right\}\frac{2^4\Delta^4}{||\bPhi\bx||^4\tau_{max}^4}f(\tau_{min}^2,\tau_{max}^2)
  d\tau_{max}^2 d\tau_{min}^2\\
 &> \exp\left\{-\frac{(\bx'\bbeta_0)^2+\Delta^2}{||\bPhi\bx||^2}\right\}\frac{2^4\Delta^4}{||\bPhi\bx||^4}
 \int_{1}^{\infty}\int_{\tau_{min}^2}^{\infty}\frac{1}{\tau_{max}^4}f(\tau_{min}^2,\tau_{max}^2)
  d\tau_{max}^2 d\tau_{min}^2\\
 \end{align*}
 Note that
 \begin{align*}
 &\int_{1}^{\infty}\int_{\tau_{min}^2}^{\infty}\frac{1}{\tau_{max}^4}f(\tau_{min}^2,\tau_{max}^2)
  d\tau_{max}^2 d\tau_{min}^2\\
  &=\int_{1}^{\infty}\int_{\tau_{min}^2}^{\infty}{m_n\choose 2}\frac{1}{2^2\tau_{max}^4}\exp\left\{-\frac{(\tau_{max}^2+\tau_{min}^2)}{2}\right\}
  \left[\exp\left\{-\frac{\tau_{min}^2}{2}\right\}-\exp\left\{-\frac{\tau_{min}^2}{2}\right\}\right]^{m_n-2}d\tau_{max}^2 d\tau_{min}^2\\
  &=\int_{1}^{\infty}\frac{m_n}{2}\frac{1}{2^2\tau_{max}^4}\exp\left\{-\frac{\tau_{max}^2}{2}\right\}\left[\exp\left\{-\frac{1}{2}\right\}-\exp\left\{-\frac{\tau_{min}^2}{2}\right\}\right]
  d\tau_{max}^2 d\tau_{min}^2\\
  &<\int_{1}^{\infty}\frac{m_n}{2}\exp\left\{-\frac{\tau_{max}^2}{2}\right\}\left[\exp\left\{-\frac{1}{2}\right\}-\exp\left\{-\frac{\tau_{min}^2}{2}\right\}\right]
  d\tau_{max}^2 d\tau_{min}^2\\
  &=\exp\left\{-\frac{m_n}{2}\right\}.
 \end{align*}
Let, $C_n=\int_{1}^{\infty}\int_{\tau_{min}^2}^{\infty}\frac{1}{\tau_{max}^4}f(\tau_{min}^2,\tau_{max}^2)
  d\tau_{max}^2 d\tau_{min}^2$. Therefore
\begin{align*}
P(|(\bPhi\bx)'\bbeta-\bx'\bbeta_0|<\Delta)>\exp\left\{-\frac{(\bx'\bbeta_0)^2+\Delta^2}{||\bPhi\bx||^2}\right\}\frac{2^4\Delta^4}{||\bPhi\bx||^4}C_n
\end{align*}
Take $\Delta=\frac{\epsilon_n^2}{4\eta}$.
 Note that
  for all large $n$, $\bx=\bx_1,...,\bx_n$.
 \begin{equation*}
 8\frac{4\log{||\bPhi\bx||}-4\log(2)-4\log(\Delta)-log(C_n)}{n\epsilon_n^2}\rightarrow 0,
 \end{equation*}
  from the assumptions. Therefore, $\frac{2^3 \Delta^4 C_n}{||\bPhi\bx||^4}>\exp\left\{-n\epsilon_n^2/8\right\}$.

 Also, note that $(\bx'\bbeta_0)^2<\sum_n |\beta_{j0}|<K$. Therefore,
 \begin{align*}
& \exp\left\{-\frac{((\bx'\bbeta_0)^2+\Delta^2)}{||\bPhi\bx||^2}\right\}>\exp\left\{-\frac{(K^2+\Delta^2)}{||\bPhi\bx||^2}\right\}\\
&>\exp\left\{-\frac{n\epsilon_n^2}{8}\right\}
 \end{align*}
 Proposition \ref{proppr2} then yields
 \begin{equation*}
 P\left(|(\bPhi\bx)'\bbeta-\bx'\bbeta_0|<\frac{\epsilon_n^2}{4\eta}\right)>\exp\left\{-\frac{n\epsilon_n^2}{4}\right\}\:\:\mbox{for all large}\:n
 \end{equation*}
 Let $\mathcal{S}=\left\{\bbeta:|(\bPhi\bx)'\bbeta-\bx'\bbeta_0|<\frac{\epsilon_n^2}{4\eta}\right\}$. Now, take $t=1$, therefore we can write $d_t(f,f_0)=E_x[g(u_i)((\bPhi\bx)'\bbeta-\bx'\bbeta_0)]$, where $u_i$ is an intermediate point between $(\bPhi\bx)'\bbeta$ and $\bx'\bbeta_0$, by integrating $y$ and applying a first order Taylor expansion. Now choose $\eta$ as in the proof of the previous theorem. Then
 \begin{equation*}
 |u_i|<|(\bPhi\bx)'\bbeta-\bx'\bbeta_0|+|\bx'\bbeta_0|<\frac{\epsilon_n^2}{4\eta}+K
 \end{equation*}
implies that $d_t(f,f_0)<\frac{\epsilon_n^2}{4}$
is a subset of $\mathcal{S}$. Hence, condition 3 is satisfied. 
 \end{proof}
\end{document}